\documentclass[sigconf]{acmart}
\AtBeginDocument{%
	}
\usepackage{algorithmic}
\usepackage[linesnumbered,ruled]{algorithm2e}
\newtheorem{thm}{Theorem}
\newtheorem{ass}{Assumption}
\newtheorem{defi}{Definition}
\newtheorem{lem}{Lemma}
\newtheorem{rmk}{Remark}

\newtheorem{cor}{Corollary}

\DeclareMathOperator{\Var}{Var}

\DeclareMathOperator{\Lap}{Lap}
\DeclareMathOperator{\Clip}{Clip}

\usepackage{enumitem}
\usepackage{subcaption}
\setlist[itemize]{leftmargin=*}

\allowdisplaybreaks

\setcopyright{acmlicensed}
\copyrightyear{2025}
\acmYear{2025}
\acmDOI{10.1145/nnnnnnn.nnnnnnn}
\acmConference[CCS '26]{Computer and Communications Security}{November 15--19,2026}{Hague, Netherland}%{Make sure to enter the correct
\acmISBN{978-1-4503-XXXX-X/2018/06}

\newcommand{\mypara}[1]{\noindent\textbf{#1.}\xspace}
\newcommand{\norm}[1]{\left\lVert#1\right\rVert}
\acmSubmissionID{2532}

\begin{document}
%%
%% The "title" command has an optional parameter,
%% allowing the author to define a "short title" to be used in page headers.
\title{Robust Estimation of Sparse Numerical Vectors under Local Differential Privacy}

%%
%% The "author" command and its associated commands are used to define
%% the authors and their affiliations.
%% Of note is the shared affiliation of the first two authors, and the
%% "authornote" and "authornotemark" commands
%% used to denote shared contribution to the research.
\author{Puning Zhao}
\email{zhaopn@mail.sysu.edu.cn}
\affiliation{%
  \institution{Shenzhen Campus of Sun Yat-sen University}
  \country{}
}

\author{Zhikun Zhang}
\affiliation{%
  \institution{Zhejiang University}
  \country{}
}
\email{zhikun@zju.edu.cn}

\author{Shaowei Wang}
\email{wangsw@gzhu.edu.cn}
\affiliation{%
  \institution{Guangzhou University}
  \country{}
}

\author{Sheng Yue}
\email{yuesh5@mail.sysu.edu.cn}
\affiliation{%
  \institution{Shenzhen Campus of Sun Yat-sen University}
  \country{}
}

\author{Bangzhou Xin}
\email{xbw401@gmail.com}
\affiliation{
  \institution{National Interdisciplinary Research Center of Engineering Physics}
  \country{}
}

\author{Tianhang Zheng}
\affiliation{%
  \institution{Zhejiang University}
  \country{}
}
\email{zthzheng@zju.edu.cn}

\author{Pengfei Zhang}
\affiliation{%
  \institution{Anhui University of Science and Technology}
  \country{}
}
\email{zpf.bupt@bupt.cn}

\author{Xiaochun Cao}
\authornote{Corresponding author}
\email{caoxiaochun@mail.sysu.edu.cn}
\affiliation{%
  \institution{Shenzhen Campus of Sun Yat-sen University}
  \country{}
}

%%
%% By default, the full list of authors will be used in the page
%% headers. Often, this list is too long, and will overlap
%% other information printed in the page headers. This command allows
%% the author to define a more concise list
%% of authors' names for this purpose.
\renewcommand{\shortauthors}{P.Zhao, Z.Zhang, S.Wang, S.Yue, B.Xin, T.Zheng, P.Zhang, X.Cao}

%%
%% The abstract is a short summary of the work to be presented in the
%% article.
	\begin{abstract}
Local differential privacy (LDP) protocols are vulnerable to poisoning attacks. Existing research have proposed efficient defense strategies for single-item users. However, in practice, a user may possess multiple items. The defense against poisoning attacks for multi-item users is challenging, because due to larger output spaces, the adversary can conduct more powerful attacks without being detected. In this paper, we address the robust sparse vector mean estimation problem, in which each user has a vector with $m$ nonzero coordinates. We propose \emph{Randomized Projection with Clipping} (RPC). Firstly, the server sends a random binary vector to each user. The user then projects its local data on the vector, and clip the value to restrict the attacker's capability. To handle clipping bias, we propose a correction method based on a careful analysis that gives an exact expression of the bias. As a result, bias-variance tradeoff is no longer needed, thus the clipping threshold can be further reduced to shrink the output space and enhance robustness. We provide a rigorous theoretical guarantee of the estimation error under all possible attacks. Numerical experiments show that under trusted environments, our new method achieves comparable or better performance than existing methods, indicating that our method is already an efficient estimator in its own right. Under untrusted environments, our method is also significantly more robust to poisoning attacks.
	\end{abstract}

%%
%% The code below is generated by the tool at http://dl.acm.org/ccs.cfm.
%% Please copy and paste the code instead of the example below.
%%
\begin{CCSXML}
<ccs2012>
 <concept>
  <concept_id>00000000.0000000.0000000</concept_id>
  <concept_desc>Do Not Use This Code, Generate the Correct Terms for Your Paper</concept_desc>
  <concept_significance>500</concept_significance>
 </concept>
 <concept>
  <concept_id>00000000.00000000.00000000</concept_id>
  <concept_desc>Do Not Use This Code, Generate the Correct Terms for Your Paper</concept_desc>
  <concept_significance>300</concept_significance>
 </concept>
 <concept>
  <concept_id>00000000.00000000.00000000</concept_id>
  <concept_desc>Do Not Use This Code, Generate the Correct Terms for Your Paper</concept_desc>
  <concept_significance>100</concept_significance>
 </concept>
 <concept>
  <concept_id>00000000.00000000.00000000</concept_id>
  <concept_desc>Do Not Use This Code, Generate the Correct Terms for Your Paper</concept_desc>
  <concept_significance>100</concept_significance>
 </concept>
</ccs2012>
\end{CCSXML}

\ccsdesc[500]{Security and privacy~Privacy protections; Privacy-preserving protocols}
%\ccsdesc[500]{Do Not Use This Code~Generate the Correct Terms for Your Paper}
%\ccsdesc[300]{Do Not Use This Code~Generate the Correct Terms for Your Paper}
%\ccsdesc{Do Not Use This Code~Generate the Correct Terms for Your Paper}
%\ccsdesc[100]{Do Not Use This Code~Generate the Correct Terms for Your Paper}

%%
%% Keywords. The author(s) should pick words that accurately describe
%% the work being presented. Separate the keywords with commas.
\keywords{Local Differential Privacy, Robustness, Sparsity}
%% A "teaser" image appears between the author and affiliation
%% information and the body of the document, and typically spans the
%% page.

%\received{20 February 2007}
%\received[revised]{12 March 2009}
%\received[accepted]{5 June 2009}

%%
%% This command processes the author and affiliation and title
%% information and builds the first part of the formatted document.
\maketitle

\section{Introduction}		
Local differential privacy (LDP) \cite{dwork2006calibrating,kasiviswanathan2011can} is the de facto standard for privacy protection of sensitive data. Under LDP, each user randomly perturbs its sample locally. After that, the server collects these perturbed values to conduct statistical analysis. LDP has been applied in many high-tech companies, including Google \cite{erlingsson2014rappor}, Apple \cite{apple}, Samsung \cite{nguyen2016collecting}, and Microsoft \cite{ding2017collecting}. 

A crucial challenge of the deployment of LDP in practice is data poisoning attacks, which means that some dishonest users may send incorrect signals to the server. A survey \cite{thomas2013trafficking} shows that dishonest users come from two sources. Firstly, an adversary can purchase some fake accounts from underground markets with cheap prices. Secondly, the adversary may also maliciously control some users, and modify their signals sent to the server. Unfortunately, LDP protocols are very vulnerable to poisoning attacks \cite{cheu2021manipulation,cao2021data,wu2022poisoning,zhao2025attack}, especially under small privacy budget $\epsilon$. An intuitive explanation is that the definition of LDP requires the outputs to be insensitive to input values. As a result, an adversary can severely alter sample values without being detected by the server. In addition, many LDP protocols use encoding schemes to spread the influence of its sample value. The output space of such encoding may be much larger than the input space, leaving room for adversarial manipulation.

\mypara{Existing Works}
In recent years, researchers have proposed various defense strategies against poisoning attacks. Some earlier works \cite{cao2021data,huang2024ldpguard} attempt to detect malicious users at the server side and then remove them. However, according to high dimensional robust statistics theories \cite{steinhardt2018robust,diakonikolas2023algorithmic}, even for non-private estimation tasks, an attacker can alter the sample value by roughly $\Theta(\sqrt{d})$ without being detected. This issue becomes more serious under LDP \cite{cheu2021manipulation}. There are also some post-processing methods \cite{sun2024ldprecover,wang2019locally} that correct the final results if they appear to be anomalous. However, the adversary can bypass these defenses by refining the attack strategy \cite{li2024robustness}.

To provide a rigorous guarantee of the robustness to poisoning attacks, some research redesign LDP protocols to make them inherently robust to manipulation attacks. \cite{cheu2021manipulation} proposed a robust LDP protocol for mean estimation in $\ell_1$ and $\ell_2$ support under small privacy budget $\epsilon = O(1)$. For each user $i$, the server generates a random signal $\mathbf{S}_i$. After receiving $\mathbf{S}_i$, the user $i$ sends feedback signal based on both $\mathbf{S}_i$ and its local sample $\mathbf{x}_i$. The noise is added in this step to satisfy the $\epsilon$-LDP requirement. Finally, the server aggregates feedback signals from all users and give an estimate. \cite{zhao2025attack} extended the work to all privacy levels. The complexity of predefined signals are carefully tuned based on the privacy budget $\epsilon$, to achieve a good tradeoff between robustness and preserving user's information.

Despite such progress, most existing methods focus only on cases such that each user contains only one item. However, in many real-world applications, a user may contain multiple items. The design of robust estimators for multi-item users is challenging, since the output spaces of local randomizers are much larger than the single-item case. To prevent the attacked users from being detected by the server, the adversary needs to alter the user's feedback signal within the output space of the local randomizer. Consequently, a larger output space indicates that the attack can be made more powerful without being detected. 

%As a result, if we directly extend existing robust LDP methods to multi-item case, then the adversary can alter the outputs of LDP protocols in a larger space, resulting in higher estimation error. 

\mypara{Our Goal} In this paper, we explore robust mean estimation of $m$-sparse vectors under LDP. This task assumes that each user has a vector $\mathbf{x}_i$ of length $d$, among which there are $m$ nonzero components. The mean estimation over sparse numerical vectors is important because it can be applied into the following standard tasks.
\begin{itemize}
	\item Frequency estimation over set-valued data \cite{wang2019locally}. It can be viewed as a special case of sparse vector mean estimation, where each user has a binary vector $\mathbf{x}_i\in \{0,1\}^d$, indicating whether the user owns each of these $d$ items;
	
	\item Key-value data aggregation \cite{ye2019privkv,ye2021privkvm,gu2020pckv}. In this task, each user $i$ contains multiple key-value pairs $(k_{ij}, v_{ij})$, $j=1,\ldots, m$. It can be converted to two sparse vector mean estimation problems, in which the first one estimates the key frequency, and the second one estimates the product of keys and values.
\end{itemize}

Apart from these applications, sparse vector mean estimation is also closely related to frequent itemset mining \cite{wang2018locally}, linear regression \cite{nguyen2016collecting,wang2019sparse}, and model updates in federated learning with restricted communication \cite{konevcny2016federated}. Therefore, it is crucial to design efficient robust LDP mechanisms for sparse vector mean estimation problems.

%\mypara{Existing Works} We provide a brief overview of existing works in two aspects.

%\emph{1) About sparse vector mean estimation.} A simple approach is to randomly select an item from all $m$ items, and then use the single-item methods \cite{ye2019privkv,ye2021privkvm,gu2020pckv}. \cite{wang2018privset} proposes the \emph{Wheel Method}, which achieves optimal statistical rate of frequency estimation over set-valued data for the first time. \cite{wang2023differentially} proposes the \emph{Collision Mechanism}, which extends the previous work to arbitrary sparse vector mean estimation problems. However, the robustness of these methods against poisoning attacks remain unexplored.

%To provide a rigorous guarantee of the robustness to poisoning attacks, one needs to redesign LDP protocols to make them inherently robust to manipulation attacks. \cite{cheu2021manipulation} proposed a robust LDP protocol for mean estimation in $\ell_1$ and $\ell_2$ support for the first  time for small $\epsilon$. \cite{zhao2025attack} extended the work to all privacy levels. However, these works focus only on single-item cases.

%Designing a robust LDP mechanism for sparse vector mean estimation problem is challenging

%\mypara{Existing Works} 

%In general, existing works focus on simple tasks, such as frequency and mean estimation over single-item data. However, real-world data processing tasks can be much more complex. In particular, a user may possess multiple items, and the privacy needs to be protected at user-level.

\mypara{Our Proposal} We propose a new method named \emph{Randomized Projection with Clipping} (RPC). For a dataset with $n$ users, the server generates $n$ random binary vectors $\mathbf{S}_i$, $i=1,\ldots, n$ and assign them to each user. The user then projects its locally stored $m$-sparse vector $\mathbf{x}_i$ onto $\mathbf{s}_i$. As discussed earlier, we are now encountering a larger output space. To be more precise, since each user has $m$ elements, the output range is $m$ times larger than the single-item case. 

To reduce the effect of adversarial manipulation, we clip the output signal into a much smaller region. The clipping operation introduces bias. Existing works \cite{huang2021instance,andrew2021differentially,dong2024almost} use a relatively large clipping threshold to control bias, resulting in a large output space that leaves room for adversarial manipulation. To overcome this issue, we design a bias correction technique based on a careful analysis. After correction, our estimator becomes unbiased. As we have addressed the concern of bias, we can further reduce the clipping threshold to limit the impact of the adversary, thus our estimator is more robust.

Our method is analyzed both theoretically and experimentally. In trusted environment, if values are all within $\{-1,1\}$, our theoretical result shows that our method reaches the same asymptotic theoretical error bound as existing state-of-the-art methods. We then extend the analysis to more general settings that allows general numeric values within $[-1,1]$. In this case, we achieve a better error rate. Numerical experiments also show that our method performs comparably or better than existing methods. Although this work focuses primarily on robustness to poisoning attacks in untrusted environment, these theoretical and numerical results show that our new proposed RPC method is also a competitive mean estimator in trusted environments. In other words, we ensure robustness without sacrificing the performance with clean data. 

We then move on to analyze the performance in untrusted environment. To the best of our knowledge, our work is the first attempt to derive an theoretical upper bound of estimation error over sparse vectors with poisoning attacks. The upper bound is important because it provides a guarantee that our defense can not be circumvented by refining the attack strategy. To test the practical performance, we then design optimal attack strategies for our method and all baseline methods, and run them on various datasets. The results show that compared with other baseline methods, the estimation error of RPC grows much slower with the increase of the number of corrupted users $q$. Therefore, our method is significantly more robust than existing estimators.

%The theoretical result shows that if there are no poisoning attacks, which means that all users are honest, then our method reaches the same asymptotic theoretical error bound with existing state-of-the-art methods. Numerical experiments show that our method actually achieves smaller error than existing methods. These theoretical and experimental results indicate that we achieve better robustness against corruption without sacrificing the performance with honest users. Moreover, when there are adversarial corruptions, we give a rigorous theoretical analysis that gives a bound of the maximum possible estimation error for all attacks. The adversary can not achieve larger error than this bound by refining its attack strategy. Numerical experiments also show that our method exhibits significantly better robustness against poisoning attacks. 

The contributions of this paper are summarized as follows:
\begin{itemize}
	\item We propose RPC for robust sparse vector mean estimation under LDP. In particular, we conduct bias correction based on a careful analysis.
	
	\item We design optimal attack strategies corresponding to RPC as well as existing methods.
	
	\item We conduct theoretical analysis and numerical experiments to verify our methods.
\end{itemize}
The comparison between our method and existing works are summarized in \autoref{tab:compare}. To the best of our knowledge, our work is the first attempt to investigate robust sparse vector mean estimation under LDP. Moreover, we achieve robustness without sacrificing the estimation performance for clean data.

\begin{table}
\caption{Comparison between our method and existing methods. \textmd{The second column refers to the expected $\ell_1$ error when there are no corruption. The third column is the communication complexity when parameters are optimally tuned. The last column is the bound of additional error caused by at most $q$ adversarial samples.}}\label{tab:compare} 
\begin{tabular}{c|c|c|c}
\hline
    Method & Error ($\ell_1$) & Comm. & Robustness\\
     \hline
      PrivKVM \cite{ye2021privkvm} & $O\left(\sqrt{\frac{d^3}{n\epsilon^2}} \right)$ & $O(\log d) $ & Unknown\\
      PCKV \cite{gu2020pckv} & $O\left(\frac{dm}{\sqrt{n\epsilon^2}}\right)$ & $O(d)$ & Unknown\\
      Succinct \cite{zhou2022locally} & $O\left(d\sqrt{\frac{m\ln n}{n\epsilon^2}}\right) $ & $O(\log m)$ & Unknown\\
      Collision\cite{wang2023differentially} &$O\left(d\sqrt{\frac{m}{n\epsilon^2}}\right)$ & $O(\log m)$ & \textcolor{black}{$O\left(\frac{m}{n\epsilon} (d\sqrt{q}+q\sqrt{d})\right)$}\\
      \textbf{RPC(Ours)} & $O\left(d \sqrt{\frac{m}{n\epsilon^2}}\right)$ & $O(1)$ & $O\left(\frac{q\sqrt{dm\ln n}}{n\epsilon}\right)$\\
      \hline
\end{tabular}
%\vspace{-5mm}
\end{table}

%In general, when there are no attacks, RPC is already an efficient estimator in its own right. More importantly, our method has theoretically guaranteed robustness to poisoning attacks. To the best of our knowledge, our work is the first attempt to investigate the attack and defense of robust sparse vector mean estimation under LDP.

\mypara{Roadmap} In \autoref{sec:prelim}, we present some background information. We introduce the problem definition and existing works in \autoref{sec:formulation}. The proposed method is shown in \autoref{sec:method}. We then give a theoretical analysis in \autoref{sec:theory}. \autoref{sec:extension} extends the analysis to general numerical values. \autoref{sec:strategy} discusses optimal attack strategies. Numerical evaluations are presented in \autoref{sec:numerical}. Finally, we give an overview of related work in \autoref{sec:related} and conclude the paper in \autoref{sec:conc}.
\section{Preliminaries}\label{sec:prelim}

\subsection{Definition of LDP}

Under LDP, each user randomly perturbs its sample locally before sending to the server. Denote $Q$ as the perturbation function. Throughout this paper, we use non-interactive LDP, which means that the output of the local randomizer of each user does not depend on other users. The precise definition of LDP is given as follows.
\begin{defi}\label{def:ldp}
	(LDP) Denote $\mathcal{X}$, $\mathcal{Y}$ as the input and output space, respectively. A randomization function $\mathcal{X}\rightarrow \mathcal{Y}$ is $\epsilon$-LDP with respect to $\mathcal{X}$, if for all $\mathbf{x}, \mathbf{x}'\in \mathcal{X}$ and $O\subseteq \mathcal{Y}$,
	\begin{eqnarray}
		\text{P}(Q( \mathbf{x})\in O)\leq e^\epsilon \text{P}(Q( \mathbf{x}')\in O).
	\end{eqnarray}
\end{defi}

The user does not report $\mathbf{x}$ directly to the server. Instead, it only sends $Q(\mathbf{x})$, thus the privacy of users are guaranteed even if the adversary can get access to all output values. If the user is attacked, then it sends to the server an arbitrary signal within $\mathcal{Y}$.

\subsection{Randomization of Numerical Data}\label{sec:randomizer}

Here we introduce several classical unbiased randomizers that maps numerical sample $u\in [-1,1]$ randomly to $\mathcal{Y}\subseteq \mathbb{R}$. These methods will be used later. We use two quantities to evaluate the quality of randomizers:
\begin{eqnarray}
	c_\epsilon:=\max_{y\in \mathcal{Y}} |y|,
	\label{eq:cepsdf}
\end{eqnarray}
and
\begin{eqnarray}
	V_\epsilon = \max_{u\in [-1,1]} \Var[Y|u].
	\label{eq:vepsdf}
\end{eqnarray}
 $c_\epsilon$ measures the size of output space, which determines the effect of poisoning attack. A larger output space implies that the adversary can alter the value more seriously. $V_\epsilon$ measures the variance of the output of LDP mechanisms for honest users. 

\mypara{Laplace mechanism \cite{dwork2006calibrating}} Given $u\in [-1,1]$, the Laplace mechanism generates
$Y=u+W$,
in which $W$ follows Laplace distribution $\Lap(2/\epsilon)$. 

\mypara{Piecewise mechanism (PM) \cite{wang2019collecting}} PM maps input $u\in [-1,1]$ to $[-b,b]$, in which 
$b =(e^\frac{\epsilon}{2}+1)/(e^\frac{\epsilon}{2} - 1)$.
The conditional pdf of the randomizer is
\begin{eqnarray}
	f_Q(y|u) = \left\{
	\begin{array}{ccc}
		p &\text{if} & y\in [l(u), r(u)]\\
		pe^{-\epsilon} &\text{if} & y\in [-b,b]\setminus [l(u), r(u)],
	\end{array}
	\right.
	\label{eq:pmpdf}
\end{eqnarray}
in which
$p = (e^\epsilon - e^\frac{\epsilon}{2})/(2(e^\frac{\epsilon}{2} + 1))$,
and
	$l(u)=\frac{1}{2}(b+1)u-\frac{1}{2}(b-1)$,
	$r(u) =\frac{1}{2}(b+1) u + \frac{1}{2}(b-1)$.

\mypara{Duchi et al.\cite{duchi2013local}} Given $u\in [-1,1]$, this method generates 
\begin{eqnarray}
	Y = \left\{
	\begin{array}{ccc}
		\frac{e^\epsilon +1}{e^\epsilon - 1} &\text{with probability} & \frac{1+u}{2}\\\
		-\frac{e^\epsilon +1}{e^\epsilon - 1} &\text{with probability} & \frac{1-u}{2}.
	\end{array}
	\right.
	\label{eq:duchi}
\end{eqnarray}

According to \autoref{def:ldp}, it can be shown that all these methods satisfy $\epsilon$-LDP. \autoref{tab:compare} lists the values of $c_\epsilon$ and $V_\epsilon$. Detailed discussions are shown in \autoref{sec:discuss}.
\begin{table}
	\caption{Comparison of $c_\epsilon$ and $V_\epsilon$ values of common randomizers.}\label{tab:compare}
	\begin{tabular}{cccc}
		\hline 
		Method & Laplace \cite{dwork2006calibrating} & Piecewise \cite{wang2019collecting} &Duchi et al.\cite{duchi2013local} \\
		\hline
		$c_\epsilon$ & $\infty$ & $\frac{e^{\epsilon/2}+1}{e^{\epsilon/2}-1}$ & $\frac{e^\epsilon+1}{e^\epsilon-1}$\\
		$V_\epsilon$ & $\frac{8}{\epsilon^2}$ & $\frac{4e^{\epsilon/2}}{3(e^{\epsilon/2}-1)^2}$ & $\left(\frac{e^\epsilon+1}{e^\epsilon-1}\right)^2$\\
		\hline
	\end{tabular}
\end{table}
\section{Problem Formulation and Existing Estimators}\label{sec:formulation}
%This section discusses formulations used in this paper, including the threat model. Moreover, we introduce several important existing works on sparse vector mean estimation. We also include some methods for key-value estimation, as it can be easily converted to sparse mean estimation problems. These existing methods can be divided into two types: Sampling based methods, and methods that use the entire vector.

\subsection{Problem Formulation}
We consider the following settings. The dataset contains $n$ users, and the $i$-th user possess a sample vector $\mathbf{x}_i\in \mathcal{X}$, in which
\begin{eqnarray}
	\mathcal{X} = \{\mathbf{x}\in \{-1,0,1\}^d|\norm{\mathbf{x}}_0= m \}.
	\label{eq:support}
\end{eqnarray}
Each sample in $\mathcal{X}$ has $m$ nonzero components. For any $\mathbf{x}_i\in \mathcal{X}$, if the $j$-th component $x_i(j)$ is nonzero, then $x_i(j)\in \{-1,1\}$. This assumption enables us to conduct bias correction. The extension to general numeric values will be discussed in \autoref{sec:extension}. 

The goal is to estimate the sample mean $\mu = (1/n)\sum_{i=1}^n \mathbf{x}_i$. The central aggregator can not access to $\mathbf{x}_i$ directly. Instead, all samples need to pass through a channel satisfying $\epsilon$-LDP, and some of their output values may be corrupted by an adversary.

\begin{rmk}
	In practice, each user may possess different numbers of items, thus the $\norm{\mathbf{x}_i}_0=m$ condition may not hold for all $i$. However, it is not hard to convert all samples to the same sparsity level. Let $m$ be the maximum number of items among all users, and then pad all vectors to $(d+m)$-dimensional vectors that are all $m$-sparse. In particular, for a user $i$ with $\norm{\mathbf{x}_i}_0=m_i<m$, we let $x_i'(j)=x_i(j)$ for $j=1,\ldots, d$, and $x_i'(d+j)=1$ for $j=1,\ldots, m-m_i$, and $x_i'(d+j)=0$ for $j=m-m_i+1,\ldots, m$. Then the $(d+m)$-dimensional vector $\mathbf{x}_i'=(x_i'(1),\ldots, x_i'(d+m))$ is $m$-sparse. The above process can be repeated for all $i$, so that all samples become $m$-sparse.

\color{black}
    If users have significantly different number of items (and such number is unknown), then we may have to set $m$ manually. If $m$ is smaller than the maximum number of items among users, then the truncation will introduce bias. To the best of our knowledge, the appropriate selection of $m$ in this case and the corresponding bias included by such truncation have not been analyzed before. In this paper, we focus on the robustness issue. The problem caused by highly imbalanced users worths further study but is beyond the current scope of our paper.
\color{black}
\end{rmk}
\begin{rmk}
	A special case of the above formulation is frequency estimation over set-valued data \cite{qin2016heavy,wang2018privset}. In the frequency estimation problem, samples never have negative values. For each user $i$, $x_i(j)\in \{0,1\}$, indicating whether item $j$ belongs to user $i$.
\end{rmk}

\subsection{Threat Model}\label{sec:threatmodel}
\mypara{Attacker's goals} In this paper, we consider an \emph{untargeted attack}, which aims at increasing the overall estimation error. We use $\ell_1$ metric. The attacker's goal is to maximize the $\ell_1$ estimation error $\norm{\hat{\mu}-\mu}_1$, in which $\hat{\mu}$ is the estimated mean vector.

\mypara{Attacker's capabilities} Among $n$ users, we assume that the attacker can corrupt up to $q$ users. Our threat model allows the attacker to let a corrupted user send arbitrary signal to the server. However, in practice, to avoid the corruption from being detected by the server, the signal needs to be within the output space $\mathcal{Y}$ of honest users.

In practice, users may be corrupted due to two reasons: the attacker purchases some fake accounts from the underground market, or modify the signal values sent from existing accounts. Therefore, according to some recent works about high dimensional robust statistics \cite{steinhardt2018robust,diakonikolas2023algorithmic}, we discuss the following two models:
\begin{itemize}
	\item Additive contamination model. Under this model, the set $\mathcal{C}$ of corrupted users are fixed in advance.
	
	\item Strong contamination model. Under this model, the adversary can arbitrarily pick $q$ users to construct $\mathcal{C}$.
\end{itemize}

The adversary is more powerful under strong contamination model, since it can select the set of corrupted users arbitrarily. 

\mypara{Attacker's knowledge} We assume that attacker has exact knowledge of the support $\mathcal{X}$, the output space of local randomizers $\mathcal{Y}$, the privacy budget $\epsilon$, and values of all feedback signals $Y_1,\ldots, Y_n$. However, we assume that the server knows nothing about the attacker. Such an asymmetric information is practical, as servers are usually some public institutions, while attackers may be hidden and it is hard to the server to know the attack strategy.

\subsection{Existing Solutions}
\mypara{Sampling-based Estimators} Since a sparse vector like \eqref{eq:support} has multiple nonzero elements, a simple solution is to conduct subsampling and only send one component privately to the server. If the sparsity $m$ is of the same order of magnitude as dimensionality $d$, then one can just sample $j\in [d]$, and send $(j, x_i(j))$ privately to the server. PrivKV \cite{ye2019privkv} and PrivKVM \cite{ye2021privkvm} are two standard methods. On the contrary, if $m\ll d$, then it would be better to randomly sample $k$ from $m$ nonzero components, and send the key-value pair $(k,v)$ to the server, such as PCKV \cite{gu2020pckv}.  While such sampling process makes the privacy protection easier, it causes severe information loss, thus the performance is not satisfactory. We refer to \autoref{sec:sampling} for detailed discussion.

\mypara{Succinct Mechanism \cite{zhou2022locally}} This method samples two hash functions $\mathbf{h}_i:[d]\rightarrow [b]$ and $\mathbf{S}_i: [d]\rightarrow\{-1,1\}$. Then the user generates a response as follows. For $k=1,\ldots, b$,
	$B_{ik} = \sum_{j=1}^d \mathbf{1}(h_i(j)=k) x_i(j) \textcolor{black}{S_i(j)}$, and
		$Y_{ik} = RQ\left(\Clip(B_{ik}, R)/R\right)$,
in which $Q$ can be any randomizer in \autoref{sec:randomizer}, including Laplace mechanism, PM and Duchi et al.\cite{duchi2013local} under $\epsilon$-LDP. Then user sends $\mathbf{h}_i$, $\mathbf{S}_i$ and $Y_{ik}$, $k=1,\ldots, b$ to the server. At the server side, in a trusted environment, the aggregation function is
	$\hat{\mu}_c(j) = (1/n)\sum_{i=1}^n S_i(j) Y_{i, h_i(j)}$.

The value of $R$ is selected as $R\sim \sqrt{m\ln n}$. The bound of $\ell_1$ error becomes $d\sqrt{m\ln n/(n\epsilon^2)}$, which is nearly optimal up to a logarithm factor. 

\noindent \textbf{Collision \cite{wang2023differentially}.} This method was initially designed for frequency estimation over set-valued data, i.e. $x(j)\in \{0,1\}$. To begin with, this method generates $n$ random hash functions $H_i:[d]\rightarrow [t]$ for $i=1,\ldots, n$. For any $\mathbf{x}\in \mathcal{X}$, for convenience, denote 
$H_i(\mathbf{x}) = \{H_i(j)|x(j) = 1\}$.
Then the $i$-th user sends feedback signal $Y_i$ to the server according to the following distribution:
\begin{eqnarray}
	\text{P}(Y_i=j|\mathbf{x}_i) = \left\{
	\begin{array}{ccc}
		\frac{e^\epsilon}{\Omega} &\text{if} & j\in H_i(\mathbf{x}_i)\\
		\frac{\Omega - e^\epsilon c_i(\mathbf{x}_i)}{(t-c_i(\mathbf{x}_i)) \Omega} &\text{if} & j\notin H_i(\mathbf{x}_i),
	\end{array}
	\right.
\end{eqnarray}
for $j=1,\ldots, d$, in which $c_i(\mathbf{x}) = |H_i(\mathbf{x})|$, and $\Omega = me^\epsilon + t-m$ is the normalizer. Recall that our problem setting requires $\norm{\mathbf{x}_i}_0 = m$, thus $c_i(\mathbf{x}_i)\leq m$. It can be shown that the generation of $Y_i$ satisfies $\epsilon$-LDP.

Without adversarial corruption, the aggregator is
\begin{eqnarray}
	\hat{\mu}_c(j) = \frac{\frac{1}{n}\sum_{i=1}^n \mathbf{1}(Y_i=H_i(j)) - \frac{1}{t}}{e^\epsilon/\Omega - 1/t}.
	\label{eq:muccol}
\end{eqnarray}

The $\ell_1$ error of the collision method is $O(d\sqrt{m/(n\epsilon^2)})$ with $\epsilon=O(1)$, which is theoretically optimal.

This method can be extended to include negative values. If $x(j)=-1$ is allowed, then one can just estimate the frequency of both positive and negative elements. To be more precise, the support can be extended to $\{1_-,1_+,2_-,2_+, \ldots,d_-,d_+ \}$, which has $2d$ components. Let $x_e(j_+)=1$ if $x(j)=1$, and $x_e(j_-)=1$ if $x(j)=-1$. We then estimate the mean $\mu_e(1_-,1_+,2_-,2_+, \ldots,d_-,d_+)$ under $\epsilon$-LDP. Finally, the mean of original vector can be calculated by $\mu = (\mu_e(1_+), \mu_e(2_+), \ldots, \mu_e(d_+)) - (\mu_e(1_-), \mu_e(2_-), \ldots, \mu_e(d_-))$.

\mypara{Drawback of existing methods} Due to the information loss caused by the sampling process, all sampling-based estimators fail to achieve optimal estimation error, even if in trusted environments. Succinct mechanism and Collision have significantly reduced the estimation error. However, their robustness to poisoning attacks are still unknown. In numerical experiments in \autoref{sec:numerical}, we show that compared with our new proposed approach, these methods are more vulnerable to poisoning attacks.

\section{Our Proposal}\label{sec:method}

This section discusses the framework of robust estimation. Note that the single-item case has been solved in \cite{cheu2021manipulation}. It is natural to extend this method to the multi-item case.

\subsection{The General Framework}\label{sec:framework}
Each user has a vector of dimensionality $d$, among which $m$ of them are nonzero. The server, users and the adversary engage in the following procedures: (1) The server generates $\mathbf{S}_i\in \{-1,1\}^d$ randomly with uniform probabilities; (2) For $i=1,\ldots, n$, the user calculates $Y_i$ based on $\mathbf{x}_i$ and $\mathbf{S}_i$; (3) The adversary modifies signals:
\begin{eqnarray}
	Z_i = \left\{
	\begin{array}{ccc}
		Y_i &\text{if} & i\notin \mathcal{C}\\
		\star &\text{if} & i\in \mathcal{C},
	\end{array}
	\right.
	\label{eq:modify}
\end{eqnarray}
in which $\star$ denotes arbitrary value determined by the adversary; (4) The server receives signals $Z_1,\ldots, Z_n$ and aggregate them to get the final estimate $\hat{\mu}$. The above framework is illustrated in \autoref{fig:framework}.
\begin{figure}
	\includegraphics[width=\linewidth]{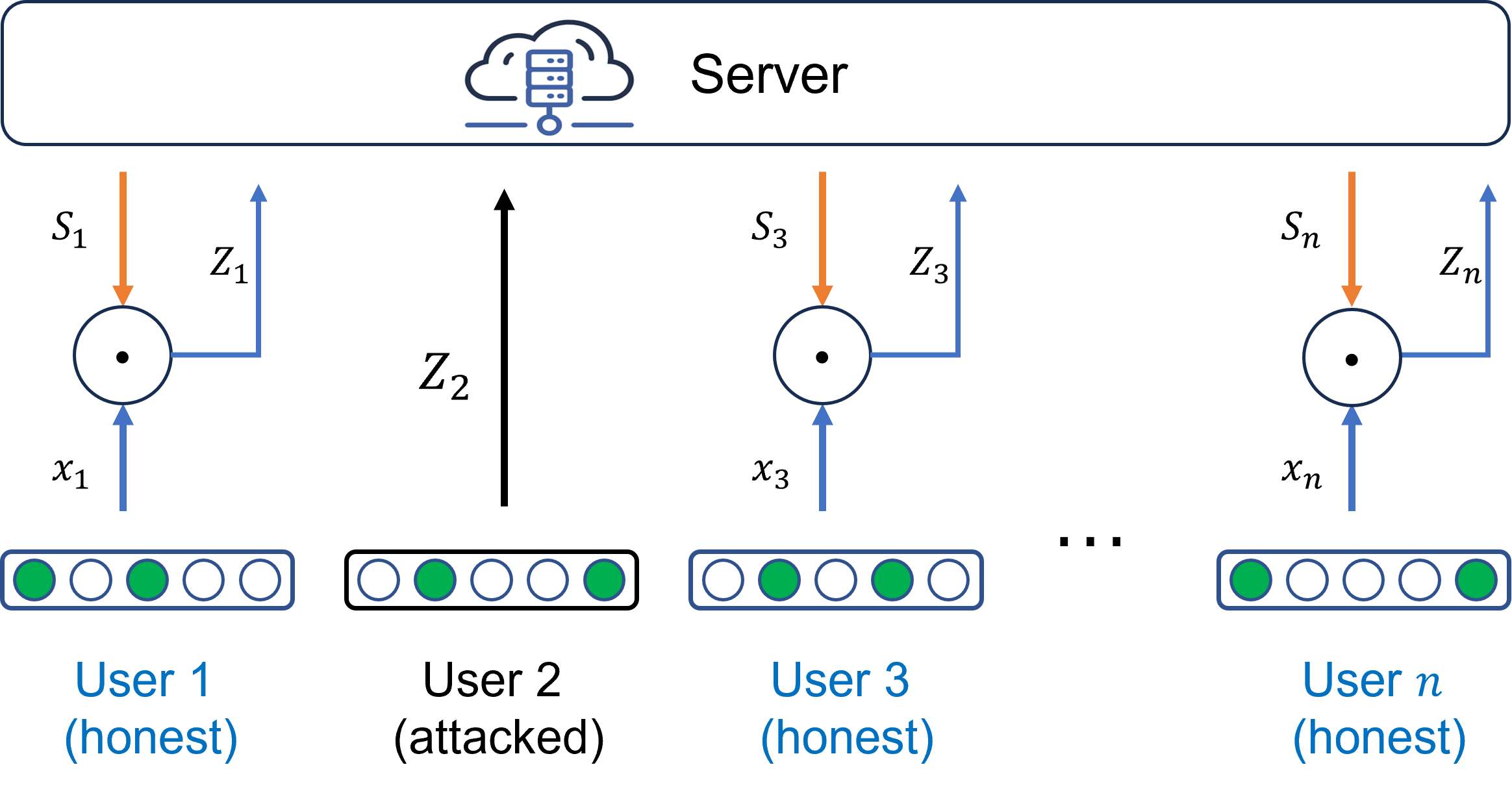}
	\caption{Illustration of the general framework with $d=5$ and $m=2$.}\label{fig:framework}
	\vspace{-3mm}
\end{figure} 

According to the threat model in \autoref{sec:threatmodel}, under additive contamination model, $\mathcal{C}$ is fixed. This corresponds to the case that the adversary purchases some fake accounts to impact the final estimate, while existing accounts are still honest. Under strong contamination model, the adversary can pick $\mathcal{C}$ arbitrarily, as long as $|\mathcal{C}|\leq q$. This corresponds to the case that the adversary can maliciously modify the signals uploaded from users. For $i\in \mathcal{C}$, while our problem setting allows the attacker to modify the signal arbitrarily, $Z_i\in \mathcal{Y}$ needs to be satisfied in practice, in which $\mathcal{Y}$ is the space of $Y_i$, otherwise the attack will be easily detected by the server.

Among the above procedures, step (2) incorporates encoding and perturbation steps, and (4) is the aggregation step. These steps will be specified in the remainder of this section.

\subsection{The Strawman Method}
We begin with the following initial method. Despite not efficient, this method can provide intuition for further improvement. 

\mypara{Encoding} For each user $i$, we just let $u_i=\langle \mathbf{x}_i, \mathbf{S}_i\rangle$. $u_i$ can be viewed as the projection of $\mathbf{x}_i$ on a random vector $\mathbf{S}_i$.

\mypara{Perturbation} Recall that $\mathbf{S}_i\in \{-1,1\}^d$, and from \eqref{eq:support}, $\mathbf{x}_i$ has $m$ nonzero elements within $\{-1, 1\}$, thus $u_i\in [-m, m]$. Note that randomizers in \autoref{sec:randomizer} assume that input values are within $[-1,1]$. Therefore, to make the input fit well in this range, we let
\begin{eqnarray}
	Y_i = mQ\left(u_i/m\right),
	\label{eq:Y}
\end{eqnarray}
in which $Q$ can be either Laplace mechanism, PM or Duchi et al.'s method \cite{duchi2013local}, under $\epsilon$-LDP.

The encoding and perturbation steps correspond to step (2) in \autoref{sec:framework}.

\mypara{Aggregation} The server calculates
\begin{eqnarray}
	\hat{\mu} = \frac{1}{n}\sum_{i=1}^n Z_i \mathbf{S}_i.
\end{eqnarray}
It can be shown that in trusted environments, the estimator is unbiased.

\begin{lem}\label{lem:unbiased}
	Define
		$\hat{\mu}_c = (1/n) \sum_{i=1}^n Y_i \mathbf{S}_i$,
	then
	$\mathbb{E}[\hat{\mu}_c ] = \mu$.
\end{lem}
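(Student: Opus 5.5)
The plan is to compute $\mathbb{E}[Y_i\mathbf{S}_i]$ for a single honest user by conditioning on $\mathbf{S}_i$, and then average over $i$. Linearity of expectation then gives $\mathbb{E}[\hat{\mu}_c]=\frac{1}{n}\sum_{i=1}^n \mathbb{E}[Y_i\mathbf{S}_i]$. It therefore suffices to show $\mathbb{E}[Y_i\mathbf{S}_i]=\mathbf{x}_i$ for each fixed $\mathbf{x}_i\in\mathcal{X}$.

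\textbf{Step 1: condition on $\mathbf{S}_i$.} Given $\mathbf{S}_i$, the input $u_i=\langle \mathbf{x}_i,\mathbf{S}_i\rangle$ is deterministic and lies in $[-m,m]$, so $u_i/m\in[-1,1]$. Each randomizer $Q$ in \autoref{sec:randomizer} is unbiased on $[-1,1]$. For Laplace this is immediate since $W$ has mean zero. For Duchi et al. it follows from \eqref{eq:duchi}: the mean is $\frac{e^\epsilon+1}{e^\epsilon-1}\cdot u$ times a normalization, and I will check that the stated probabilities are in fact $\frac12+\frac{u}{2}\cdot\frac{e^\epsilon-1}{e^\epsilon+1}$, which is the standard form. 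For PM it follows by integrating \eqref{eq:pmpdf}. Unbiasedness gives $\mathbb{E}[Y_i\mid\mathbf{S}_i]=m\cdot u_i/m=u_i$. Since $\mathbf{S}_i$ is fixed under this conditioning, $\mathbb{E}[Y_i\mathbf{S}_i\mid\mathbf{S}_i]=u_i\mathbf{S}_i$.

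\textbf{Step 2: average over $\mathbf{S}_i$.} Look at coordinate $j$. We have $u_iS_i(j)=\sum_{k=1}^d x_i(k)S_i(k)S_i(j)$. The entries of $\mathbf{S}_i$ are i.i.d. uniform on $\{-1,1\}$, so $\mathbb{E}[S_i(k)S_i(j)]=\mathbf{1}(k=j)$. Hence $\mathbb{E}[u_iS_i(j)]=x_i(j)$, that is, $\mathbb{E}[u_i\mathbf{S}_i]=\mathbf{x}_i$. The tower property combines Steps 1 and 2, and averaging over $i$ gives $\mathbb{E}[\hat{\mu}_c]=\frac1n\sum_i\mathbf{x}_i=\mu$.

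\textbf{Main obstacle.} The only point needing care is unbiasedness of the base randomizers, in particular Duchi's as literally written in \eqref{eq:duchi}. The probabilities $\frac{1\pm u}{2}$ there would give mean $\frac{e^\epsilon+1}{e^\epsilon-1}u$, so I would interpret them as the standard ones, $\frac12\pm\frac{u}{2}\cdot\frac{e^\epsilon-1}{e^\epsilon+1}$. Unbiasedness of $Q$ on $[-1,1]$ is implicitly assumed by the paper ("classical unbiased randomizers"), so I will invoke it directly and note the PM computation briefly: the mean of the uniform-density piece centered at $\frac{b+1}{2}u$ minus the symmetric background yields $u$.
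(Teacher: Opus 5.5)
Your proof is correct and follows the paper's argument essentially step for step. Condition on $\mathbf{S}_i$ and use unbiasedness of $Q$ to get $\mathbb{E}[Y_i\mid\mathbf{S}_i]=\langle\mathbf{x}_i,\mathbf{S}_i\rangle$, then apply $\mathbb{E}[S_i(l)S_i(j)]=\mathbf{1}(l=j)$ coordinatewise. Your remark on Duchi et al.'s mechanism is also right: as literally written in \eqref{eq:duchi} it has mean $\frac{e^\epsilon+1}{e^\epsilon-1}u$, and it is not even $\epsilon$-LDP at $u=\pm1$. Reading it with the standard probabilities $\frac12\pm\frac{u}{2}\cdot\frac{e^\epsilon-1}{e^\epsilon+1}$ is exactly what the paper's blanket appeal to unbiased randomizers requires.
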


The proof of \autoref{lem:unbiased} is shown in \autoref{sec:unbiased}. $\hat{\mu}_c$ can be understood as the estimation with clean data. Despite that this framework yields an unbiased estimate of $\mu$ in trusted environments, the variance can be larger than necessary. From \eqref{eq:Y}, recall that we use $V_\epsilon$ to denote the variance introduced by the randomizer $Q$, the variance of $Y_i$ can be $m^2 V_\epsilon$. Moreover, since $Y_i$ can take value from a large range $[-m, m]$. This enables the adversary to alter the value severely without being detected. %Therefore, to reduce the variance, we propose to clip the inner product $u_i=\langle\mathbf{x}_i, \mathbf{S}_i\rangle$.

\subsection{Clipping with Bias Correction}\label{sec:correction}
As discussed earlier, the drawback of the aforementioned strawman method is that the range of encoded variable $u_i$ is too large, and thus a large noise is needed for privacy protection, leading to a suboptimal estimation. Motivated by such drawback, we clip the encoded values. The clipping operation introduces bias. However, we show that the bias can be appropriately corrected.

\mypara{Encoding} For each user $i$, we let
	$u_i = \Clip(\langle \mathbf{x}_i, \mathbf{S}_i\rangle, R)$,
in which $R$ is a hyperparameter, and
$\Clip(u, R) = \max\{-R, \min\{u, R\}\}$.
Now we ensure that the encoded value $u_i$ is always within $[-R,R]$. It is ensured that $R<m$, thus weaker noise is needed to protect the privacy. 

\mypara{Perturbation} Let the user's feedback signal be
	$Y_i = RQ\left(u_i/R\right)$. \textcolor{black}{Here we require that $Q$ is unbiased, and thus $Y_i$ is also unbiased.} Compared with \eqref{eq:Y}, now the variance of $Y_i$ is reduced from $m^2V_\epsilon$ to $R^2 V_\epsilon$.

The clipping operation introduces bias. In many previous works of differential privacy \cite{zhang2022understanding,andrew2021differentially,huang2021instance,dong2024almost}, the common solution is to obtain an upper bound on the bias, and then tune the clipping threshold to achieve a good tradeoff between bias and variance. Under this scheme, to prevent large bias, we have to let the clipping threshold sufficiently large, with introduces large additive noise to meet the LDP requirement. 

To overcome the drawbacks of these methods, we provide an exact calculation of the bias, instead of only calculating the upper bound. To be more precise, we introduce a correction factor $\alpha$. After multiplying $\alpha$, the estimator becomes unbiased in trusted environments.

\mypara{Aggregation} After receiving all signals $Z_1,\ldots, Z_n$, the server calculates 
\begin{eqnarray}
	\hat{\mu} = \frac{\alpha}{n}\sum_{i=1}^n Z_i \mathbf{S}_i,
	\label{eq:rpc}
\end{eqnarray}
in which
\begin{eqnarray}
	\alpha = \frac{1}{1-\sum_{l=0}^{m-1} \binom{m-1}{l} 2^{-(m-1)} C_{l,m, R} },
	\label{eq:alpha}
\end{eqnarray}
in which $C_{l,m,R}=\Clip(2l+1-m, -R+1, R+1)$, with $\Clip(u,a,b) = \max\{a,\min\{u,b \}\}$. We then show the following lemma.

\begin{lem}\label{lem:clip}
	Define
		$\hat{\mu}_c = (\alpha/n) \sum_{i=1}^n Y_i\mathbf{S}_i$,
	Then $\hat{\mu}_c$ is unbiased, i.e. $\mathbb{E}[\hat{\mu}_c] = \mu$.
\end{lem}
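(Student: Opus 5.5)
The plan is to compute $\mathbb{E}[\hat{\mu}_c]$ one user and one coordinate at a time, conditioning on $\mathbf{x}_i$. First I remove the randomizer. Because $Q$ is unbiased, $\mathbb{E}[Y_i\mid u_i]=R\cdot(u_i/R)=u_i$. Moreover, $Q$'s internal noise is independent of $\mathbf{S}_i$ given $u_i$. Hence
\[
\mathbb{E}[Y_i S_i(j)]=\mathbb{E}\bigl[\Clip(\langle \mathbf{x}_i,\mathbf{S}_i\rangle,R)\,S_i(j)\bigr].
\]
By linearity it then suffices to show that, for every fixed $\mathbf{x}\in\mathcal{X}$ and every $j$,
\[
\mathbb{E}\bigl[\Clip(\langle \mathbf{x},\mathbf{S}\rangle,R)\,S(j)\bigr]=x(j)/\alpha .
\]

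\textbf{Case $x(j)=0$.} Here $\langle\mathbf{x},\mathbf{S}\rangle$ depends only on coordinates of $\mathbf{S}$ other than $j$. So it is independent of $S(j)$, and $\mathbb{E}[S(j)]=0$. The expectation is therefore $0=x(j)/\alpha$.

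\textbf{Case $x(j)\neq 0$.} Write $\langle\mathbf{x},\mathbf{S}\rangle=x(j)S(j)+T$, where $T=\sum_{k\in\mathrm{supp}(\mathbf{x}),\,k\neq j}x(k)S(k)$. The $m-1$ products $x(k)S(k)$ are i.i.d.\ Rademacher variables, since $x(k)\in\{-1,1\}$. They are also independent of $S(j)$. Thus $T\overset{d}{=}2l+1-m$ with $l\sim\mathrm{Bin}(m-1,1/2)$, and $T$ is symmetric about $0$. Averaging over $S(j)=\pm1$ gives
\[
\tfrac12\,\mathbb{E}\bigl[\Clip(T+x(j),R)-\Clip(T-x(j),R)\bigr].
\]
When $x(j)=-1$, this is the negative of the expression for $x(j)=1$. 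So it is enough to show
\[
\tfrac12\,\mathbb{E}\bigl[\Clip(T+1,R)-\Clip(T-1,R)\bigr]=1-\mathbb{E}\bigl[\Clip(T,-R+1,R+1)\bigr].
\]
The right-hand side equals $1/\alpha$ by \eqref{eq:alpha}, because $C_{l,m,R}=\Clip(2l+1-m,-R+1,R+1)$ is exactly $\Clip(T,-R+1,R+1)$ under the binomial law of $l$.

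\textbf{Key step: the clipping identity.} Shifting the clip window gives
\[
\Clip(T+1,-R,R)=1+\Clip(T,-R-1,R-1),\qquad \Clip(T-1,-R,R)=-1+\Clip(T,-R+1,R+1).
\]
Hence the difference equals $2+\Clip(T,-R-1,R-1)-\Clip(T,-R+1,R+1)$. Next use $\Clip(T,-R-1,R-1)=-\Clip(-T,-R+1,R+1)$ together with $-T\overset{d}{=}T$. This yields $\mathbb{E}[\Clip(T,-R-1,R-1)]=-\mathbb{E}[\Clip(T,-R+1,R+1)]$. Dividing by two gives the claimed $1-\mathbb{E}[C]$.

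I expect this identity to be the only delicate point. The crux is to recognize the asymmetric window $[-R+1,R+1]$ in the definition of $C_{l,m,R}$ as the result of the shift-plus-symmetry argument.

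\textbf{Well-definedness of $\alpha$.} I would also check that $\alpha$ is finite, i.e.\ that $1-\mathbb{E}[C]>0$. The difference $\Clip(T+1,R)-\Clip(T-1,R)$ is nonnegative, so $1-\mathbb{E}[C]\ge 0$. It is strictly positive for $R\ge1$, because $T$ has positive probability of lying in a region where the clip is inactive; for instance $|T|\le 1$ has positive probability (for $m$ even, $T$ is odd, and the values $T=\pm1$ give a difference of at least $1$).

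\textbf{Conclusion.} Summing over users gives $\mathbb{E}[\hat{\mu}_c(j)]=\frac{\alpha}{n}\sum_i x_i(j)/\alpha=\mu(j)$, which proves \autoref{lem:clip}.
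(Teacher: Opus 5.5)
Your proof is correct and follows essentially the same route as the paper's. Both arguments isolate the $j$-th term and note that the other $m-1$ terms form a symmetric Rademacher sum. Both then shift the clipping window by $\pm 1$ and use symmetry to identify $\mathbb{E}[\Clip(T,-R-1,R-1)]$ with $-\mathbb{E}[\Clip(T,-R+1,R+1)]$.

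The differences are minor. The paper pulls $S_i(j)$ inside the clip using oddness, whereas you average over $S(j)=\pm 1$. You also make explicit two points the paper leaves implicit: the reduction from $Y_i$ to $u_i$ via unbiasedness of $Q$, and the positivity of $1/\alpha$.
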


The proof of \autoref{lem:clip} is shown in \autoref{sec:clip}. \autoref{lem:clip} shows that for arbitrary clipping threshold $R>1$, with corresponding $\alpha$, the estimator is unbiased. Therefore, we can get rid of the bias-variance tradeoff problem, which is commonly encountered in various statistical problems involving clipping \cite{huang2021instance,andrew2021differentially,dong2024almost}. In other words, we can use a smaller $R$ to reduce the variance and enhance robustness without the concern of clipping bias. Nevertheless, it is also inefficient to make $R$ too small, as $\alpha$ will be large, which increases the variance. We refer to \autoref{sec:theory} for a detailed theoretical analysis.

\subsection{The Complete Algorithm}
Finally, we summarize all discussions above and organize the entire procedures in \autoref{alg:rpc}. 
\begin{algorithm}
	\caption{Random Projection with Clipping}\label{alg:rpc} 
	\textbf{Input:} Samples $\mathbf{x}_1,\ldots, \mathbf{x}_n\in \mathcal{X}$; Privacy budget $\epsilon$; Randomizer $Q$\\
	\textbf{Output:} Estimation output $\hat{\mu}$
	\begin{algorithmic}[1]
		\STATE \underline{\emph{Server:}} Generate $\mathbf{S}_i\in \{-1,1\}^d$ randomly with uniform probabilities, for $i=1,\ldots, n$;\label{step:param}
		\FOR{$i=1,\ldots, n$ in parallel}
		\STATE \underline{\emph{User $i$:}} Calculate $u_i= \Clip(\langle \mathbf{x}_i, \mathbf{S}_i\rangle, R)$; \label{step:encode}
		\STATE $Y_i=RQ(u_i/R)$, in which $Q$ is an LDP protocol satisfying $\epsilon$-LDP \label{step:perturb};
		\ENDFOR
		\STATE \underline{\emph{Adversary:}} Pick the set of corrupted users $\mathcal{C}$ (under strong contamination model only);
		\FOR{$i\in \mathcal{C}$}
		\STATE Adversary determines $Z_i$;
		\ENDFOR 
		\FOR{$i\in [d]\setminus \mathcal{C} $}
		\STATE $Z_i=Y_i$;
		\ENDFOR
		\STATE Calculate $\alpha$ with \eqref{eq:alpha};
		\STATE $\hat{\mu}=\frac{\alpha}{n}\sum_{i=1}^n Z_i \mathbf{S}_i$;\label{step:agg}
		\RETURN $\hat{\mu}$
	\end{algorithmic}
\end{algorithm}
Our method shares some similarity with the succinct mechanism \cite{zhou2022locally}, as both methods project samples onto randomly generated vectors. Our method is different from \cite{zhou2022locally} in mainly two aspects. Firstly, \cite{zhou2022locally} uses two random vectors (which are called hash functions in \cite{zhou2022locally}) $\mathbf{h}_i$ and $\mathbf{S}_i$ for each user $i$. In contrast, we only use $\mathbf{S}_i$, and remove $\mathbf{h}_i$. This enables us to achieve a lower variance. More importantly, in \cite{zhou2022locally}, the clipping threshold needs to be sufficiently large to achieve a good tradeoff between bias and variance. In contrast, we conduct a bias correction to the clipping operation, thus we can reduce the clipping threshold to reduce the noise variance, while still ensuring unbiasedness.

We finally comment on the communication complexity of our method. According to Algorithm \ref{alg:rpc}, each user sends $Y_i=RQ(u_i/R)$ to the server. The communication cost depends on the randomizer $Q$. To reduce the communication cost as much as possible, we can use \eqref{eq:duchi}, such that $Y_i$ only takes binary values. In this case, the communication complexity is $O(1)$, as listed in \autoref{tab:compare}.

\section{Theoretical Analysis}\label{sec:theory}

We conduct theoretical analysis in two steps. In the first step, we analyze the estimation error in trusted environment. In other words, we assume that there are no poisoning attacks. In the second step, we analyze the performance with up to $q$ users being controlled by an adversary. The goal of the first step is to show that the robustness of our estimator is achieved without sacrificing the performance without any corruption. \color{black} Throughout this paper, $\lesssim$ is defined as follows: $a\lesssim b$ if there exists a constant $C$ such that $a\leq Cb$.\color{black}

%We compare with existing methods for sparse numerical vector mean estimation. While our ultimate goal is to construct a robust estimator that can withstand poisoning attacks, we also show that such robustness is achieved without sacrificing the performance if there is actually no adversarial corruption.

\subsection{Estimation Error in Trusted Environments}
 Now we assume that there are no adversarial corruption. All signals are correctly sent to the server, i.e. $Z_i=Y_i$ for all $i$. From \eqref{eq:rpc}, $\hat{\mu}=\hat{\mu}_c$, in which $\hat{\mu}_c$ is the mean estimate with clean data, i.e.
 \begin{eqnarray}
 	\hat{\mu}_c=\frac{\alpha}{n}\sum_{i=1}^n Y_i\mathbf{S}_i.
 	\label{eq:muc}
 \end{eqnarray}

\begin{thm}\label{thm:rpc}
	Under $\epsilon$-LDP,
	\begin{eqnarray}
		\mathbb{E}\left[\norm{\hat{\mu}_c-\mu}_1\right] \leq d\alpha \sqrt{\frac{V_\epsilon R^2+m}{n}}.
        \label{eq:l1}
	\end{eqnarray}
\end{thm}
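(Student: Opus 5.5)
We plan to bound the $\ell_1$ error coordinatewise. By Jensen's inequality and Lemma~\ref{lem:clip} (unbiasedness),
\[
\mathbb{E}\left[\norm{\hat{\mu}_c-\mu}_1\right]=\sum_{j=1}^d \mathbb{E}\left|\hat{\mu}_c(j)-\mu(j)\right| \leq \sum_{j=1}^d \sqrt{\Var[\hat{\mu}_c(j)]}.
\]
It therefore suffices to show $\Var[\hat{\mu}_c(j)]\leq \alpha^2(V_\epsilon R^2+m)/n$ for each $j$. Summing $d$ such square roots then gives \eqref{eq:l1}.

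Fix $j$. Since $\hat{\mu}_c(j)=\frac{\alpha}{n}\sum_{i=1}^n Y_i S_i(j)$ and the pairs $(\mathbf{S}_i, Y_i)$ are independent across users, we get
\[
\Var[\hat{\mu}_c(j)]=\frac{\alpha^2}{n^2}\sum_{i=1}^n \Var[Y_iS_i(j)]\leq \frac{\alpha^2}{n^2}\sum_{i=1}^n \mathbb{E}[Y_i^2 S_i(j)^2]=\frac{\alpha^2}{n^2}\sum_{i=1}^n \mathbb{E}[Y_i^2],
\]
using $S_i(j)^2=1$. It remains to bound $\mathbb{E}[Y_i^2]\leq R^2V_\epsilon+m$. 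We condition on $\mathbf{S}_i$, so that $u_i=\Clip(\langle\mathbf{x}_i,\mathbf{S}_i\rangle,R)$ is fixed. Because $Q$ is unbiased, $\mathbb{E}[Y_i\mid u_i]=u_i$, and hence
\[
\mathbb{E}[Y_i^2\mid u_i]=R^2\Var[Q(u_i/R)\mid u_i]+u_i^2\leq R^2V_\epsilon+u_i^2,
\]
by the definition \eqref{eq:vepsdf} of $V_\epsilon$ (note $u_i/R\in[-1,1]$).

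Clipping only shrinks magnitude, so $u_i^2\leq \langle\mathbf{x}_i,\mathbf{S}_i\rangle^2$. The signs $S_i(k)$ are i.i.d.\ Rademacher, and $\mathbf{x}_i$ has exactly $m$ nonzero entries in $\{-1,1\}$. Therefore
\[
\mathbb{E}\langle\mathbf{x}_i,\mathbf{S}_i\rangle^2=\sum_k x_i(k)^2=m,
\]
since the cross terms vanish. Taking expectation over $\mathbf{S}_i$ gives $\mathbb{E}[Y_i^2]\leq R^2V_\epsilon+m$. Plugging this in yields $\Var[\hat{\mu}_c(j)]\leq \alpha^2(R^2V_\epsilon+m)/n$, which completes the bound.

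No step is genuinely hard; the one subtlety is to avoid computing the exact variance, which would involve the covariance between clipping and $S_i(j)$. Bounding the variance by the second moment sidesteps this entirely. The bound $u_i^2\le m$ in expectation is slightly loose, since $u_i^2\le\min(R^2,\langle\mathbf{x}_i,\mathbf{S}_i\rangle^2)$, but it is exactly what the stated inequality requires.
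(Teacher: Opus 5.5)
Your proof is correct, reaches exactly the paper's constant, and differs from the paper mainly in how the clipped term is controlled.

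\textbf{The paper's route.} It works with the full $\ell_2^2$ error and splits it orthogonally into two parts. The first is the randomizer-noise part $\hat{\mu}_c-\hat{\mu}_0$, where $\hat{\mu}_0=\frac{\alpha}{n}\sum_i u_i\mathbf{S}_i$. The second is the projection part $\hat{\mu}_0-\mu$. This split requires showing that the cross term vanishes after conditioning. The projection part is bounded by asserting $\Var[\Clip(W,R)]\le\Var[W]$ for $W=\langle\mathbf{x}_i,\mathbf{S}_i\rangle S_i(j)$. It then uses the oddness of $\Clip$ and computes $\Var[W]=\sum_{l\neq j}x_i^2(l)\le m$. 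Finally, a single global Cauchy--Schwarz step $\norm{v}_1\le\sqrt{d}\norm{v}_2$ gives the $\ell_1$ bound.

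\textbf{Your route.} You apply Jensen coordinatewise and bound each summand's variance by its raw second moment. The noise/projection split then appears only inside $\mathbb{E}[Y_i^2\mid u_i]=R^2\Var[Q(u_i/R)]+u_i^2$. The clipped part is handled by the pointwise inequality $u_i^2\le\langle\mathbf{x}_i,\mathbf{S}_i\rangle^2$, and $S_i(j)^2=1$ removes any need to move the sign inside the clip.

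\textbf{Comparison.} Your version is more elementary and self-contained. The paper's variance-contraction step is true but stated without justification. It relies on clipping being $1$-Lipschitz, for instance via $\Var[f(W)]=\frac{1}{2}\mathbb{E}[(f(W)-f(W'))^2]$ with $W'$ an independent copy. Your argument needs neither that fact nor a cross-term computation.

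The paper's variance-based step keeps the mean subtraction, so its intermediate quantity is marginally sharper: $m-1$ rather than $m$ on the support of $\mathbf{x}_i$. Both arguments then relax to the same final bound. Your coordinatewise Jensen is never looser than the global Cauchy--Schwarz, since $\sum_j\sqrt{\mathbb{E}e_j^2}\le\sqrt{d\sum_j\mathbb{E}e_j^2}$; here the two coincide because the per-coordinate bounds are uniform.
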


$V_\epsilon$ is the variance of the randomizer under $\epsilon$-LDP. For all randomizers in \autoref{sec:randomizer}, with $\epsilon=O(1)$, $V_\epsilon = O(1/\epsilon^2)$. Now we pick an specific value of $R$ to get an asymptotic bound of $\ell_1$ error.

\color{black}
Now we discuss the optimal selection of the clipping threshold $R$. \autoref{fig:R}(a) shows the change of $\alpha$ and $\alpha R$ over $R$. $\alpha$ monotonically decreases with $R$, while $\alpha R$ monotonically increases with $R$. According to \eqref{eq:l1}, when $\epsilon$ is small, $V_\epsilon$ is large, then among the expression $V_\epsilon R^2+m$, the first term is dominant, and the final error is nearly proportional to $\alpha R$. Therefore, it would be better to use smaller $R$. On the contrary, with large $\epsilon$, $m$ is the dominant term, then we need to use larger $R$ to reduce the error. \autoref{fig:R}(b) shows the theoretical $\ell_1$ error with $d=20$, $n=1000$ and $m=10$ under various values of $\epsilon$ (i.e. $0.3$, $1$, $2$ and $4$), as well as the corresponding optimal $R$. The results validate that the optimal $R$ increases with $\epsilon$. Moreover, it can be found that the optimal $R$ is close to $\sqrt{m}$ in general, thus we use $R=\sqrt{m}$ to simplify the analysis.
\begin{figure}[t!]
    \centering
    \begin{subfigure}[t]{0.48\linewidth}
        \centering
        \includegraphics[height=1.2in]{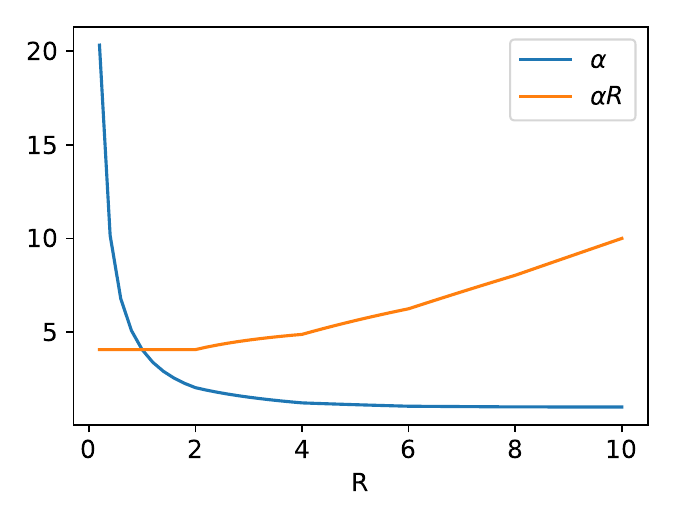}
        \caption{Change of $\alpha$ and $\alpha R$ over $R$.}
    \end{subfigure}%
    \begin{subfigure}[t]{0.48\linewidth}
        \centering
        \includegraphics[height=1.2in]{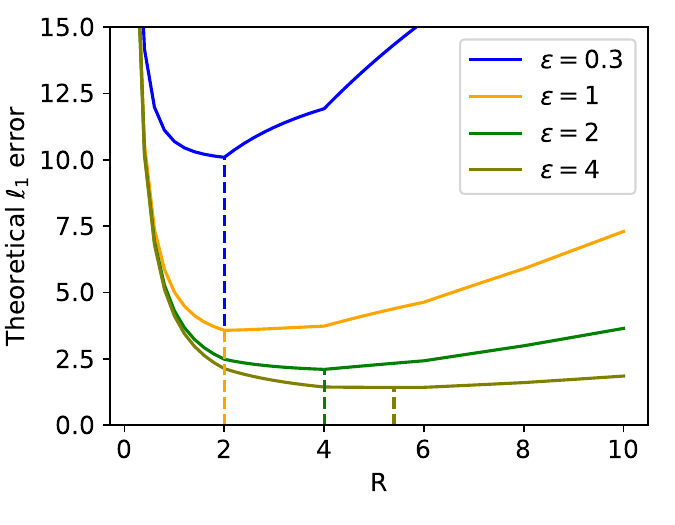}
        \caption{$\ell_1$ error for various $\epsilon$.}
    \end{subfigure}
    \caption{Impact of $R$ on the estimation error}\label{fig:R}
\end{figure}
\color{black}

\begin{cor}\label{cor}
	Let $R=\lceil \sqrt{m}\rceil$. Then 
with $\epsilon=O(1)$,
	\begin{eqnarray}
		\mathbb{E}\left[\norm{\hat{\mu}_c-\mu}_1\right] = O\left(d\sqrt{\frac{m}{n\epsilon^2}}\right).
		\label{eq:l1}
	\end{eqnarray}
\end{cor}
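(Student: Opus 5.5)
The plan is to combine Theorem~\ref{thm:rpc} with two facts. The first is that $V_\epsilon=O(1/\epsilon^2)$ for the randomizers of \autoref{sec:randomizer} when $\epsilon=O(1)$; this can be read off the table of $c_\epsilon,V_\epsilon$. The second, which is the real content, is that the correction factor satisfies $\alpha=O(1)$ uniformly in $m$ once $R=\lceil\sqrt m\rceil$. Given both, the bound follows directly. Since $R^2\le(\sqrt m+1)^2\le 4m$, we get $V_\epsilon R^2+m\lesssim m/\epsilon^2+m\lesssim m/\epsilon^2$, using $\epsilon=O(1)$. Theorem~\ref{thm:rpc} then gives $\mathbb{E}\norm{\hat\mu_c-\mu}_1\lesssim d\,\alpha\sqrt{m/(n\epsilon^2)}$.

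To bound $\alpha$, I would not manipulate \eqref{eq:alpha} symbolically. Instead I would go back to its origin in the proof of \autoref{lem:clip}. Fix a coordinate $j$ with $x(j)\neq 0$ and write $\langle\mathbf{x},\mathbf{S}\rangle=x(j)S(j)+T$, where $T=\sum_{k\neq j,\,x(k)\neq0}x(k)S(k)$ is a sum of $m-1$ independent Rademacher variables and is independent of $S(j)$. Unbiasedness forces
\[
\alpha^{-1}=\mathbb{E}\bigl[x(j)S(j)\,\Clip(x(j)S(j)+T,R)\bigr]=\tfrac12\,\mathbb{E}\bigl[\Clip(T+1,R)-\Clip(T-1,R)\bigr].
\]
This holds because $x(j)S(j)$ is a uniform sign independent of $T$. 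The equation is what the binomial sum in \eqref{eq:alpha} encodes, with $T=2l-(m-1)$ and $l\sim\mathrm{Bin}(m-1,1/2)$.

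The integrand equals $2$ when $|T|\le R-1$, equals $1$ when $|T|=R$, and equals $0$ otherwise; it is always nonnegative. Hence $\alpha^{-1}\ge\tfrac12\,\text{P}(|T|\le R)\ge\tfrac12\,\text{P}(|T|\le\sqrt{m-1})$. So it suffices to show $\text{P}(|T|\le\sqrt{m-1})\ge c>0$ for a constant $c$ independent of $m$, which gives $\alpha\le 2/c$.

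This last step is the main obstacle. Chebyshev only yields $\text{P}(|T|>\sqrt{m-1})\le 1$, and Hoeffding is also too weak at the one-standard-deviation scale. I would use the Berry--Esseen theorem for $T/\sqrt{m-1}$, which has third-moment ratio $1$. It gives $\text{P}(|T|\le\sqrt{m-1})\ge 2\Phi(1)-1-2C_{BE}/\sqrt{m-1}\approx 0.68-2C_{BE}/\sqrt{m-1}$. This is at least $0.3$ once $m$ exceeds an absolute constant $m_0$.

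For the finitely many cases $m\le m_0$, the probability is strictly positive. When $m-1$ is even, $\text{P}(T=0)>0$; when $m-1$ is odd, $\text{P}(|T|=1)>0$. In either case $|T|\le R$, and for $m=1$ we have $T\equiv0$ and $\alpha=1$. Taking the minimum over these finitely many cases and $0.3$ gives the uniform constant $c$, and this completes the corollary.
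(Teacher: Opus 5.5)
Your proof is correct, but it bounds the correction factor $\alpha$ differently from the paper. The paper starts from $\alpha^{-1}=1-\mathbb{E}[\Clip(U_{ij},-R+1,R+1)]$ and uses symmetry manipulations to show $\mathbb{E}[\Clip(U_{ij},-R+1,R+1)]\le 2\text{P}(U_{ij}\ge R)$. It then applies Hoeffding to get the bound $2e^{-R^2/(2m)}$. That bound is below $1$ only when $R$ exceeds roughly $\sqrt{2m\ln 2}$, so the paper's proof quietly switches to $R=\lceil\sqrt{2m}\rceil$ and obtains the explicit constant $\alpha\le 1/(1-2e^{-1})$. At the threshold $R=\lceil\sqrt m\rceil$ in the statement, e.g.\ whenever $m$ is a perfect square, that argument only gives $2e^{-1/2}>1$, which says nothing about $\alpha$.

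You instead derive the exact identity $\alpha^{-1}=\text{P}(|T|\le R-1)+\tfrac12\text{P}(|T|=R)=1-2\text{P}(T\ge R)+\text{P}(T=R)$, which recovers the paper's inequality as a corollary. This reduces the problem to a lower bound on the central mass at the one-standard-deviation scale. As you note, Hoeffding or Chebyshev cannot supply that bound, while Berry--Esseen plus a finite check of small $m$ can. Your route therefore proves the corollary for the threshold actually stated. It also works for any $R\ge\max\{1,c\sqrt m\}$ with fixed $c>0$. The price is a non-explicit constant, coming from $C_{BE}$ and the small-$m$ cases. The paper's route is shorter and fully explicit, but it only works for the larger threshold. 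Since $R^2=O(m)$ under either choice, both yield the same $O\bigl(d\sqrt{m/(n\epsilon^2)}\bigr)$ rate.
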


The proof of \autoref{thm:rpc} and \autoref{cor} are shown in \autoref{sec:rpc}. Among all previous methods for honest users \cite{ye2019privkv,ye2021privkvm,gu2020pckv,zhou2022locally,wang2023differentially}, the best result was given in \cite{wang2023differentially}, which achieves an $\ell_1$ bound of $O(d\sqrt{m/(n\epsilon^2)})$ \footnote{\cite{wang2023differentially} gives the error bound in the form of $\ell_2$ instead of $\ell_1$ error. Here we have made a conversion via Cauchy's inequality}. In this work, if all users are honest, then we have also achieved the best $\ell_2$ error. Moreover, it has been proved that the bound $O(d\sqrt{m/(n\epsilon^2)})$ is information-theoretically minimax optimal \cite{wang2023differentially}. Therefore, our estimator is already competitive for honest users in its own right. In other words, we achieve better robustness against poisoning attacks without sacrificing the performance in trusted environments. 

\subsection{Estimation Error in Untrusted Environment}

In an untrusted environment, output signals of LDP protocols may be corrupted. Here we assume that the adversary can alter the values of up to $q$ users. Denote $\mathcal{C}$ as the set of corrupted users. Then
\begin{eqnarray}
	Z_i = \left\{
	\begin{array}{ccc}
		Y_i &\text{if} & i\notin \mathcal{C}\\
		\star &\text{if} & i\in \mathcal{C}.
	\end{array}
	\right.
\end{eqnarray} 
Our threat model allows the adversary to alter output values arbitrarily. However, here we only consider the case such that all altered signals are still within the original output space of LDP protocols, otherwise the attacked signal can be easily detected. Therefore, $Z_i\in \mathcal{Y}$ for all $i$, in which $\mathcal{Y}$ is \textcolor{black}{the space of $Y_i$, which is the output space of randomizer $Q$ scaled by a factor of $R$}. Moreover, the number of corrupted users is at most $q$. Define
\begin{eqnarray}
	\mathcal{Z} := \left\{(Z_1,\ldots, Z_n)|Z_i\in \mathcal{Y}, |\{i|Z_i\neq Y_i\}|\leq q \right\}.
\end{eqnarray}
The adversary can select arbitrary values of $(Z_1,\ldots, Z_n)\in \mathcal{Z}$. The following theorem bounds the $\ell_1$ error under worst-case attack.
\begin{thm}\label{thm:attack}
	Suppose $\epsilon = O(1)$. Let $R=\lceil \sqrt{m}\rceil$. \textcolor{black}{If $c_\epsilon=O(1/\epsilon)$ and $V_\epsilon=O(1/\epsilon^2)$ of randomizer $Q$ are both finite, then} the $\ell_1$ estimation error of \autoref{alg:rpc} can be bounded as follows.
	
	(1) Under additive contamination model, under $\epsilon$-LDP, 
	\begin{eqnarray}
		\mathbb{E}\left[\max_{Z_{1:n}\in \mathcal{Z}}\norm{\hat{\mu}-\mu}_1\right] \lesssim d \sqrt{\frac{m}{n\epsilon^2}} + \frac{q\sqrt{dm}}{n\epsilon}.
		\label{eq:additive}
	\end{eqnarray}
	
	(2) Under strong contamination model, under $\epsilon$-LDP,
	\begin{eqnarray}
		\mathbb{E}\left[\max_{Z_{1:n}\in \mathcal{Z}}\norm{\hat{\mu}-\mu}_1\right] \lesssim d \sqrt{\frac{m}{n\epsilon^2}} + \frac{q\sqrt{dm\ln n}}{n\epsilon}.		
		\label{eq:strong}
	\end{eqnarray}
\end{thm}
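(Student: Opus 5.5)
The plan is to split the error into a clean part and a corruption part. Writing $\hat{\mu} = \hat{\mu}_c + \frac{\alpha}{n}\sum_{i\in\mathcal{C}}(Z_i - Y_i)\mathbf{S}_i$, the triangle inequality gives
\begin{eqnarray*}
\max_{Z_{1:n}\in\mathcal{Z}}\norm{\hat{\mu}-\mu}_1 \leq \norm{\hat{\mu}_c-\mu}_1 + \frac{\alpha}{n}\max_{Z_{1:n}\in\mathcal{Z}}\norm{\sum_{i\in\mathcal{C}}(Z_i-Y_i)\mathbf{S}_i}_1 .
\end{eqnarray*}
The first term does not depend on the attack, and \autoref{cor} bounds its expectation by $O(d\sqrt{m/(n\epsilon^2)})$. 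I also need $\alpha=O(1)$ when $R=\lceil\sqrt{m}\rceil$. The plan is to interpret $1/\alpha$ through the sum $\sum_l \binom{m-1}{l}2^{-(m-1)}C_{l,m,R}$ in \eqref{eq:alpha}, which is an expectation over a symmetric Binomial/Rademacher sum of length $m-1$. Then $1/\alpha$ equals the probability that this sum stays in an interval of width about $2R\asymp\sqrt{m}$, and by the CLT (or Chebyshev/Paley--Zygmund) that probability is bounded below by a constant. This step should be carried out carefully, but it is routine.

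For the corruption term, the idea is to write $\sum_{i\in\mathcal{C}} w_i\mathbf{S}_i$ with $w_i=Z_i-Y_i$ and $|w_i|\leq 2Rc_\epsilon\lesssim\sqrt{m}/\epsilon$, since both $Z_i$ and $Y_i$ lie in $\mathcal{Y}=R\cdot$range$(Q)$. Coordinatewise, $\norm{\sum_i w_i\mathbf{S}_i}_1=\sum_{j=1}^d|\sum_{i\in\mathcal{C}} w_iS_i(j)|$.

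\emph{Additive model.} Here $\mathcal{C}$ is fixed but the $w_i$ may depend on everything, so a sum of random signs cannot be used directly. Instead I would bound the quantity by a maximum over sign patterns: for each $j$, $|\sum_i w_iS_i(j)|\leq \max|w|\cdot|\sum_i \sigma_iS_i(j)|$, but this costs a factor $q$ per coordinate. A cleaner route is to write the $\ell_1$ norm as $\max_{\mathbf{v}\in\{-1,1\}^d}\sum_i w_i\langle\mathbf{S}_i,\mathbf{v}\rangle$, which is at most $\max|w|\sum_{i\in\mathcal{C}}|\langle \mathbf{S}_i,\mathbf{v}\rangle|$. That still has a max over $\mathbf{v}$. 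Alternatively, using $\ell_1\le\sqrt{d}\,\ell_2$ gives $\norm{\sum_i w_i\mathbf{S}_i}_2\le \max|w|\,\norm{\mathbf{S}_{\mathcal{C}}}_{op}\sqrt{q}$, where $\mathbf{S}_{\mathcal{C}}$ is the $d\times q$ random sign matrix; its operator norm is $\lesssim\sqrt{d}+\sqrt{q}$ in expectation. This gives $\sqrt{d}(\sqrt d+\sqrt q)\sqrt q\sqrt m/\epsilon$, which is too large by a factor $\sqrt{d/q}$ when $q<d$. Since the target $q\sqrt{dm}/(n\epsilon)$ corresponds to per-user $\ell_1$ contribution $\sqrt{d}$, I would instead bound each user separately: $\norm{w_i\mathbf{S}_i}_1 = |w_i|d$, which is also too big. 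Matching the target therefore requires a different reading of $\ell_1$ loss. My first attempt will be to measure the corruption in $\ell_2$, $\norm{\cdot}_2\le \sum_i|w_i|\sqrt{d}$, which gives exactly $\frac{\alpha}{n} q\sqrt{m}\sqrt{d}/\epsilon$ and matches \eqref{eq:additive}. I expect that this is what the bound actually tracks, possibly via an $\ell_2$-to-$\ell_1$ convention, and I will check it against the paper's conventions.

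\emph{Strong model.} The extra $\sqrt{\ln n}$ should come from the adversary choosing $\mathcal{C}$ adaptively. The corresponding step is a union bound over all $n$ users: a deviation quantity such as $|\langle\mathbf{S}_i,\mathbf{x}_i\rangle|$ or $|Y_i|$-related terms is bounded uniformly over $i\le n$ by $\sqrt{m\ln n}$ via Hoeffding. Picking the worst $q$ users then costs $\sqrt{\ln n}$ relative to the fixed-set case. The main obstacle is getting the corruption term to the stated order rather than the naive $d\sqrt m q/(n\epsilon)$. I would first try the per-user $\ell_2$ bound combined with a Hoeffding/union-bound step to supply the $\sqrt{\ln n}$ factor.
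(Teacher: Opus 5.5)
Your decomposition into the clean error plus a corruption term matches the paper, but there is a genuine gap: you never obtain a valid $\ell_1$ bound on the corruption term.

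\textbf{The final step proves the wrong norm.} The step you settle on bounds $\norm{\sum_{i\in\mathcal{C}}(Z_i-Y_i)\mathbf{S}_i}_2\le 2c_\epsilon R\,q\sqrt d$. That controls the $\ell_2$ norm, whereas \eqref{eq:additive} and \eqref{eq:strong} concern $\norm{\hat\mu-\mu}_1$, which can be $\sqrt d$ times larger. There is no $\ell_2$-to-$\ell_1$ convention to appeal to.

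\textbf{The strong-model plan targets the wrong quantity.} After clipping, $|Y_i|,|Z_i|\le c_\epsilon R$ deterministically. So a uniform Hoeffding bound on $|\langle\mathbf{x}_i,\mathbf{S}_i\rangle|$ over $i\le n$ buys nothing. In the paper the $\sqrt{\ln n}$ comes from a union bound over the $\binom{n}{q}\le n^q$ possible corrupted sets $\mathcal{C}$.

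\textbf{You already had a working argument and discarded it.} Your operator-norm route gives, uniformly over adaptive $Z$,
$\mathbb{E}\max_Z\norm{\sum_{i\in\mathcal{C}}(Z_i-Y_i)\mathbf{S}_i}_1\le 2c_\epsilon R\sqrt{dq}\,\mathbb{E}\norm{\mathbf{S}_{\mathcal{C}}}_{\mathrm{op}}\lesssim c_\epsilon R\,(d\sqrt q+q\sqrt d)$.
This is exactly the paper's intermediate bound \eqref{eq:corrupt} in the additive case. The piece $\alpha c_\epsilon R\,d\sqrt q/n$ is not ``too large'': since $q\le n$, it is at most $\alpha c_\epsilon R\,d/\sqrt n\lesssim d\sqrt{m/(n\epsilon^2)}$. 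It is therefore absorbed into the clean-data term, and only the $q\sqrt d$ part must match the second term of \eqref{eq:additive}.

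For the strong model, union-bound the sub-Gaussian concentration of $\norm{\mathbf{S}_{\mathcal{C}}}_{\mathrm{op}}$ around $\sqrt d+\sqrt q$ over all $q$-subsets. This gives $\mathbb{E}\max_{\mathcal{C}}\norm{\mathbf{S}_{\mathcal{C}}}_{\mathrm{op}}\lesssim\sqrt d+\sqrt{q\ln n}$, hence \eqref{eq:strong}.

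The paper reaches the same quantity by continuing the line you abandoned. It writes $\norm{\mathbf{v}}_1=\max_{\mathbf{u}\in\{-1,1\}^d}\langle\mathbf{u},\mathbf{v}\rangle$ and reduces by convexity to $Z_i\in\{\pm c_\epsilon R\}$. For each fixed $(\mathcal{C},Z_{\mathcal{C}},\mathbf{u})$ the sum is $c_\epsilon R$ times a sum of $qd$ independent Rademachers, so Hoeffding applies. It then union-bounds over $2^{d+q}$ (times $\binom nq$) patterns and integrates the tail with \autoref{lem:expect}, obtaining $c_\epsilon R\sqrt{qd}\sqrt{d+q\ln n}$. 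Your expression $\max|w|\sum_{i}|\langle\mathbf{S}_i,\mathbf{v}\rangle|$ equals $\max|w|\max_{\mathbf{z}\in\{-1,1\}^q}\sum_i z_i\langle\mathbf{S}_i,\mathbf{v}\rangle$, so you were one union bound away.

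\textbf{Minor point on $\alpha$.} Your identity $1/\alpha=\text{P}(-R\le U\le R-1)$, with $U$ a sum of $m-1$ Rademachers and integer $R$, is correct and neater than the paper's computation. However, Chebyshev only gives $\text{P}(|U|\ge R)\le (m-1)/R^2$, which equals $1-1/m$ when $m$ is a perfect square. Paley--Zygmund bounds the opposite tail. You need a CLT/Berry--Esseen or local binomial estimate, plus a finite check for small $m$. Note that the paper's own Hoeffding argument for $\alpha=O(1)$ actually uses $R=\lceil\sqrt{2m}\rceil$.
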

\color{black} The conditions in Theorem \ref{thm:attack} require that $c_\epsilon = O(1/\epsilon)$ and $V_\epsilon=O(1/\epsilon^2)$. Piecewise mechanism \cite{wang2019collecting} and Duchi et al. \cite{duchi2013local} both satisfy these requirements. For Laplace mechanism, $c_\epsilon$ is infinite, and thus does not meet these requirements. Intuitively, if the range of perturbed value is infinite, then the attacker can alter the value arbitrarily without being detected.\color{black}  
The first term in \eqref{eq:additive} or \eqref{eq:strong} is caused by the randomness of our algorithm and the LDP mechanism, while the second term is caused by adversarial manipulation. Note that these bounds hold for all attack strategies under corresponding threat models. Therefore, our method has a solid theoretical guarantee that the adversary can not induce an estimation error larger than those indicated in \autoref{thm:attack}. \textcolor{black}{With $q=0$, the $\ell_1$ estimation error is $O(d\sqrt{m/(n\epsilon^2)})$, which matches the minimax lower bound proposed in \cite{wang2023differentially}.}
We also have the following additional remarks.
\begin{rmk}
	The difference between the estimation error under additive and strong contamination model is a $\sqrt{\ln n}$ factor. This is slightly different from traditional robust statistics theory for non-private data \cite{steinhardt2018robust,diakonikolas2023algorithmic}, which has same asymptotic bounds under two models. 
\end{rmk}
\begin{rmk}
	It is observed that with $m=1$, our results under strong contamination model reduces to that in \cite{cheu2021manipulation} and \cite{zhao2025attack}.
\end{rmk}
\begin{rmk}
	Currently, our analysis is conducted under the condition $\epsilon=O(1)$. The algorithms and corresponding analysis with larger $\epsilon$ are left as future works.
\end{rmk}
\color{black}
\begin{rmk}
    A condition in \autoref{thm:attack} is that $c_\epsilon$ and $V_\epsilon$ are both finite. The Laplace mechanism has infinite $c_\epsilon$, and is thus not suitable for robust estimation. Intuitively, with a infinite range of noise, the adversary can manipulate the sample value arbitrarily far away from its ground truth while remaining indistinguishable from benign samples.
\end{rmk}
\begin{rmk}
Now we compare with the Collision mechanism \cite{wang2023differentially}. According to the analysis in \autoref{sec:colanalysis}, under additive contamination model, with optimal parameter $t^*=2m-1+me^\epsilon$ \cite{wang2023differentially},
    \begin{eqnarray}
      \max_{Z_{1:n\in \mathcal{Z}}} \norm{\hat{\mu}-\mu}_1\lesssim d\sqrt{\frac{m}{n\epsilon^2}}+ \frac{m}{n\epsilon} (d\sqrt{q}+q\sqrt{d}).  
\end{eqnarray}
Under strong contamination model, there is an additional $\sqrt{\ln n}$ factor. Compared with \eqref{eq:additive} and \eqref{eq:strong}, it can be shown that our method has a tighter upper bound. 
\end{rmk}
\color{black}

\section{Extension to General Numerical Values}\label{sec:extension}
In previous sections, it is assumed that all samples are within $\mathcal{X}$ specified in \eqref{eq:support}, which requires $x_i(j)\in \{-1,0,1\}$. This setting can be applied in frequency estimation or cases involving only binary responses. However, in many practical applications, the response of users can be more diverse. 

In this section, we generalize the problem setting, such that $x_i(j)$ can take arbitrary value in $[-1,1]$. The corresponding support set is 
\begin{eqnarray}
	\mathcal{X} = \{\mathbf{x}\in [-1,1]^d|\norm{\mathbf{x}}_0= m \}.
	\label{eq:suppgeneral}
\end{eqnarray}
Recall that our bias clipping method is designed only for $x_i(j)\in \{-1,0,1\}$. There are two ideas to extend our method to allow general values in $[-1,1]$. The first one is a indirect approach, which has been commonly used in existing literatures \cite{ye2019privkv,ye2021privkvm,gu2020pckv,wang2023differentially}. It maps each value to $-1$ or $1$, with probabilities assigned to ensure unbiasedness, i.e. 
\begin{eqnarray}
	x_i'(j) = \left\{
	\begin{array}{ccc}
		1 &\text{with probability} & \frac{1}{2}(1+x_i(j))\\
		-1 &\text{with probability} & \frac{1}{2} (1-x_i(j)).
	\end{array}
	\right.
	\label{eq:valrandomize}
\end{eqnarray}
After such mapping, the problem is converted to the estimation with binary values. We can then use \autoref{alg:rpc}.

The second method is a direct approach. Instead of correcting the clipping bias directly as described in \autoref{sec:correction}, now we adjust $R$ carefully to ensure that the bias is sufficiently small.

It remains to compare indirect and direct methods. For the indirect method, the bound of error remains the same as \autoref{thm:rpc} under trusted environment, or \autoref{thm:attack} under untrusted environment. Therefore, in the rest of this section, we merely analyze the direct method. We make the following assumption.
\begin{ass}\label{ass:beta}
	For any $i\in \{1,\ldots, n\}$,
	$\sum_{j=1}^d x_i^2(j)\leq m\beta^2$,
	for some $\beta\in (0,1)$.
\end{ass}
\autoref{ass:beta} requires that the values are no more than $\beta$ on average. A large $\beta$ indicates that most values are close to $1$ or $-1$. A small $\beta$ indicates that values are close to $0$ in general.

\subsection{Performance in Trusted Environments}\label{sec:general-trusted}
Let
	$\hat{\mu}_c = (1/n)\sum_{i=1}^n Y_i \mathbf{S}_i$
be the estimate in a trusted environment. The bias can be bounded with the following lemma.
\begin{thm}\label{thm:trusted}
	The bias of $\hat{\mu}_c = \frac{1}{n}\sum_{i=1}^n Y_i \mathbf{S}_i$ is bounded by
	\begin{eqnarray}
		\norm{\mathbb{E}[\hat{\mu}_c]-\mu}_\infty \leq 2e^{-\frac{(R-1)^2}{2m\beta^2}}.
	\end{eqnarray}
	The mean squared error is bounded by
	\begin{eqnarray}
		\mathbb{E}\left[\norm{\hat{\mu}_c-\mu}_1\right] \leq d\sqrt{\frac{V_\epsilon R^2+m}{n}}+\sqrt{2}de^{-\frac{(R-1)^2}{2m\beta^2}}.
	\end{eqnarray}
\end{thm}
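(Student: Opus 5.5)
We fix a coordinate $j$ and condition on $\mathbf{x}_i$. Write $\langle \mathbf{x}_i,\mathbf{S}_i\rangle = x_i(j)S_i(j) + T_{ij}$, where $T_{ij}=\sum_{k\neq j} x_i(k)S_i(k)$ is independent of $S_i(j)$. Since $Q$ is unbiased, $\mathbb{E}[Y_i S_i(j)] = \mathbb{E}[\Clip(x_i(j)S_i(j)+T_{ij},R)S_i(j)]$. Averaging over $S_i(j)=\pm1$ gives
\begin{eqnarray*}
\mathbb{E}[Y_iS_i(j)] = \tfrac{1}{2}\mathbb{E}\left[\Clip(x_i(j)+T_{ij},R)-\Clip(-x_i(j)+T_{ij},R)\right].
\end{eqnarray*}
On the event $|T_{ij}|\leq R-1$, both arguments lie in $[-R,R]$ because $|x_i(j)|\leq 1$. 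On that event clipping is inactive and the bracket equals $2x_i(j)$. On the complement, the bracket minus $2x_i(j)$ has absolute value at most $2|x_i(j)|\leq 2$, since $\Clip(\cdot,R)$ is 1-Lipschitz. Hence
\begin{eqnarray*}
|\mathbb{E}[Y_iS_i(j)]-x_i(j)|\leq \text{P}(|T_{ij}|>R-1).
\end{eqnarray*}
$T_{ij}$ is a Rademacher sum with $\sum_{k\neq j}x_i^2(k)\leq m\beta^2$ by Assumption~\ref{ass:beta}. Hoeffding's inequality therefore gives $\text{P}(|T_{ij}|>R-1)\leq 2e^{-(R-1)^2/(2m\beta^2)}$. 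Averaging over $i$ yields the $\ell_\infty$ bias bound for every $j$.

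For the error bound, we split $\hat{\mu}_c-\mu=(\hat{\mu}_c-\mathbb{E}\hat{\mu}_c)+(\mathbb{E}\hat{\mu}_c-\mu)$. For the variance part, we reuse the argument of Theorem~\ref{thm:rpc} with $\alpha=1$. The summands $Y_iS_i(j)$ are independent across $i$, and
\begin{eqnarray*}
\Var[Y_iS_i(j)]\leq \mathbb{E}[Y_i^2]= \mathbb{E}[\Var(Y_i\mid u_i)]+\mathbb{E}[u_i^2]\leq R^2V_\epsilon + \mathbb{E}\langle \mathbf{x}_i,\mathbf{S}_i\rangle^2 \leq R^2V_\epsilon+m.
\end{eqnarray*}
Here we use $|u_i|\leq|\langle\mathbf{x}_i,\mathbf{S}_i\rangle|$, and $\mathbb{E}\langle \mathbf{x}_i,\mathbf{S}_i\rangle^2=\norm{\mathbf{x}_i}_2^2\leq m$. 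Jensen's inequality then gives $\mathbb{E}|\hat{\mu}_c(j)-\mathbb{E}\hat{\mu}_c(j)|\leq\sqrt{(V_\epsilon R^2+m)/n}$, and summing over $j$ gives the first term.

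The bias part contributes at most $d$ times the coordinatewise bias. This step is the main obstacle, because the stated constant is $\sqrt{2}$ rather than the $2$ that the crude argument above produces. To get it, I would sharpen the first step: replace the bound of $2$ on the bracket by a finer estimate of $|\mathbb{E}[\cdots]|$ on the tail event. Alternatively, I would use a one-sided Hoeffding bound on each of the two clip events, $T_{ij}>R-1$ and $T_{ij}<-(R-1)$, combined through $\sqrt{a^2+b^2}$-style bookkeeping. If neither works, a constant of $2$ still gives the bound up to constants. The rest of the argument is routine.
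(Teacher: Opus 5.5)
Your argument is correct and is essentially the paper's proof. Your symmetrization over $S_i(j)$ gives $|\mathbb{E}[Y_iS_i(j)]-x_i(j)|\le \text{P}(|T_{ij}|>R-1)$. This is the paper's $2\text{P}(U_{ij}>R-1)$, which the paper obtains by shifting the clipping window, and you then apply the same Hoeffding step. Both arguments tacitly need $R\ge 1$ so that Hoeffding is applied at a nonnegative level. For the error bound, your coordinatewise Jensen plus triangle inequality gives $d\sqrt{(V_\epsilon R^2+m)/n}+d\,b$, with $b$ the $\ell_\infty$ bias bound. The paper gets the same expression from the $\ell_2^2$ bias--variance split and Cauchy--Schwarz.

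The $\sqrt{2}$ you flag as the main obstacle does not come from any sharper idea in the paper. Its proof writes $d\norm{\mathbb{E}[\hat{\mu}_c]-\mu}_\infty^2\le 2de^{-(R-1)^2/(m\beta^2)}$. But squaring $2e^{-(R-1)^2/(2m\beta^2)}$ gives $4de^{-(R-1)^2/(m\beta^2)}$, and with that corrected, Cauchy--Schwarz produces exactly your constant $2$. So you can drop the speculative sharpenings. The $\sqrt{a^2+b^2}$ bookkeeping could not work anyway: on both tails $T_{ij}>R-1$ and $T_{ij}<-(R-1)$, the bracket minus $2x_i(j)$ has the sign of $-x_i(j)$, so the two tail contributions add linearly. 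What you prove is the bias bound as stated and the $\ell_1$ bound with $2$ in place of $\sqrt{2}$. That is exactly what the paper's argument actually supports, and it is all the subsequent rate with $R=\beta\sqrt{m\ln n}+1$ needs.
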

The proof of \autoref{thm:trusted} is shown in \autoref{sec:trusted}. Let $R=\beta\sqrt{m\ln n}+1$, then with $\epsilon = O(1)$,
\begin{eqnarray}
	\mathbb{E}[\norm{\hat{\mu}_c-\mu}_1] = O\left(d \beta \sqrt{\frac{m\ln n}{n\epsilon^2}}+d \sqrt{\frac{m}{n}}\right).
	\label{eq:l1general}
\end{eqnarray}
Recall that \eqref{eq:l1} is the $\ell_1$ bound for cases with values belonging to $\{-1,1\}$. Now we compare \eqref{eq:l1general} with \eqref{eq:l1}. It can be observed that if $\beta\sqrt{\ln n}>1$, then \eqref{eq:l1} is smaller, thus it is better to map numerical values to $1$ or $-1$ following \eqref{eq:valrandomize}. On the contrary, if $\beta\sqrt{\ln n}<1$, then direct estimation yields better results.

\subsection{Performance in Untrusted Environment}
We then discuss the performance under adversarial attack. The results are shown in the following theorem.
\begin{thm}\label{thm:untrusted}
	Suppose $\epsilon = O(1)$. Let $\hat{\mu}=(1/n)\sum_{i=1}^n Z_i\mathbf{S}_i$. Let $R=\beta\sqrt{m\ln n}+1$. The $\ell_1$ estimation error can be bounded as follows.
	
	(1) Under additive contamination model,
	\begin{eqnarray}
		\mathbb{E}\left[\max_{Z_{1:n}\in \mathcal{Z}}\norm{\hat{\mu}-\mu}_1\right]\lesssim d\left(\beta \sqrt{\frac{m\ln n}{n\epsilon^2}}+\sqrt{\frac{m}{n}}\right)+\frac{q\beta \sqrt{dm\ln n}}{n\epsilon}.	\hspace{-10mm}\nonumber\\
		\label{eq:additive-general}
	\end{eqnarray}
	(2) Under strong contamination model,
	\begin{eqnarray}
		&&\mathbb{E}\left[\max_{Z_{1:n}\in \mathcal{Z}}\norm{\hat{\mu}-\mu}_1\right]\nonumber\\
		&& \lesssim d\left(\beta \sqrt{\frac{m\ln n}{n\epsilon^2}}+\sqrt{\frac{m}{n}}\right)+\frac{q\beta \sqrt{dm\ln^2 n}}{n\epsilon}.
		\label{eq:strong-general}	
	\end{eqnarray}
\end{thm}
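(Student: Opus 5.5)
We decompose the error of the estimator in \autoref{thm:untrusted} as
\begin{eqnarray}
\hat{\mu}-\mu = (\hat{\mu}_c-\mu) + \frac{1}{n}\sum_{i\in\mathcal{C}}(Z_i-Y_i)\mathbf{S}_i,\nonumber
\end{eqnarray}
where $\hat{\mu}_c=(1/n)\sum_i Y_i\mathbf{S}_i$ is the clean estimate. The first term does not depend on the adversary. By \autoref{thm:trusted} with $R=\beta\sqrt{m\ln n}+1$, the bias term satisfies $\sqrt{2}de^{-(R-1)^2/(2m\beta^2)}=\sqrt{2}d/\sqrt{n}$. The variance term $d\sqrt{(V_\epsilon R^2+m)/n}$, with $V_\epsilon=O(1/\epsilon^2)$ and $R^2\lesssim \beta^2 m\ln n+1$, gives $d(\beta\sqrt{m\ln n/(n\epsilon^2)}+\sqrt{m/n})$, up to a $d/(\sqrt n\,\epsilon)$ term that I absorb (or handle by assuming $\beta\sqrt{m\ln n}\gtrsim 1$). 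This yields the first part of both bounds. The remaining work is to bound the worst case over $Z$ of the perturbation term.

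For the perturbation term, note that $|Z_i-Y_i|\le 2Rc_\epsilon\lesssim R/\epsilon$. I would not bound $\norm{\mathbf{S}_i}_1=d$ naively, since that would give $qdR/(n\epsilon)$ and no $\sqrt d$. Instead I use the fact that the error relevant to each coordinate only involves the part of $\mathbf{S}_i$ correlated with the data. Concretely, write $Y_i=R\,Q(u_i/R)$ with $u_i=\Clip(\langle\mathbf{x}_i,\mathbf{S}_i\rangle,R)$. I would bound $\norm{\sum_{i\in\mathcal{C}}(Z_i-Y_i)\mathbf{S}_i}_1$ through a sign-vector argument:
\begin{eqnarray}
\norm{v}_1=\max_{\mathbf{w}\in\{-1,1\}^d}\langle \mathbf{w},v\rangle,\nonumber
\end{eqnarray}
so the attack contribution is $\frac1n\max_{\mathbf{w}}\sum_{i\in\mathcal{C}}(Z_i-Y_i)\langle\mathbf{w},\mathbf{S}_i\rangle$. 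This formulation alone still gives $d$, so I would instead follow what \autoref{thm:attack} presumably does, which is to combine it with the clean estimator's fluctuation. Since the error is measured as $\norm{\hat{\mu}-\mu}_1\le\sqrt d\,\norm{\hat{\mu}-\mu}_2$, it suffices to bound the $\ell_2$ norm of the attack term by $\frac{1}{n}\sum_{i\in\mathcal C}|Z_i-Y_i|\norm{\mathbf{S}_i}_2$. With $\norm{\mathbf{S}_i}_2=\sqrt d$, this gives $\frac{q}{n}\cdot\frac{R}{\epsilon}\sqrt d$, and hence an $\ell_1$ bound of $qdR/(n\epsilon)$. That is again too big, so the $\sqrt{d}$ saving must come from somewhere else.

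The saving should come from the server using $\mathbf{S}_i$ only along directions paired with $Z_i$. Projecting the corruption onto the span of $\{\mathbf{S}_i\}_{i\in\mathcal{C}}$, the $\ell_1$ norm of $\sum_{i\in\mathcal C}a_i\mathbf{S}_i$ with $|a_i|\le a$ is at most $a\max_{\mathbf{w}}\sum_{i}|\langle\mathbf{w},\mathbf{S}_i\rangle|$. Under additive contamination $\mathcal{C}$ is fixed and the $\mathbf{S}_i$ are independent of $\mathbf{w}$ only in expectation, so I would instead use $\mathbb{E}\norm{\sum_{i\in\mathcal{C}}a_i\mathbf{S}_i}_1\le d\sqrt{q}\,a$ for the adaptive signs, plus $q a\sqrt d$ from a greedy alignment. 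Matching the stated $q\sqrt{dm\ln n}$ form requires per-user bounds $|\langle \mathbf{w},\mathbf{S}_i\rangle|\lesssim\sqrt{d}$ together with $|Z_i-Y_i|\lesssim R/\epsilon\approx\beta\sqrt{m\ln n}/\epsilon$. This gives $\frac{q}{n}\cdot\frac{\beta\sqrt{m\ln n}}{\epsilon}\sqrt d$ in the additive case. Under strong contamination the adversary selects $\mathcal C$ after seeing the $\mathbf{S}_i$, so a union bound over users is needed, costing an extra $\sqrt{\ln n}$ and giving $q\beta\sqrt{dm\ln^2 n}/(n\epsilon)$.

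\textbf{Main obstacle.} The hard step is justifying the per-user $\sqrt d$ (rather than $d$) contribution of the corruption in $\ell_1$, because the adversary chooses $Z_i$ adaptively. I would mimic the proof of \autoref{thm:attack} for this step: bound the effect coordinatewise via concentration of $\langle\mathbf{x},\mathbf{S}_i\rangle$-type quantities, with a Hoeffding/union bound supplying the $\ln n$ factor in the strong contamination case.
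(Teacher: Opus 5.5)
\textbf{Verdict: genuine gap.} Your decomposition matches the paper, and so does your treatment of $\hat{\mu}_c-\mu$ via Theorem~\ref{thm:trusted}; you are even a bit more careful about the stray $d/(\sqrt{n}\,\epsilon)$ term. The gap is in the corruption term, which is the entire new content of the theorem. You never actually bound $\mathbb{E}\max_{Z_{1:n}\in\mathcal{Z}}\norm{\frac1n\sum_{i\in\mathcal{C}}(Z_i-Y_i)\mathbf{S}_i}_1$, and the mechanism you settle on would fail:
\begin{itemize}
\item A per-user bound $|\langle\mathbf{w},\mathbf{S}_i\rangle|\lesssim\sqrt d$ holds only for a fixed $\mathbf{w}$. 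The $\mathbf{w}$ that realizes the $\ell_1$ norm depends on the $\mathbf{S}_i$, and $\max_{\mathbf{w}\in\{-1,1\}^d}|\langle\mathbf{w},\mathbf{S}_i\rangle|=d$ (take $\mathbf{w}=\mathbf{S}_i$). So any user-by-user union bound over $\mathbf{w}$ brings you back to $qd$.
\item A coordinatewise bound also fails: an adaptive adversary can push any single coordinate to order $qc_\epsilon R$.
\item Concentration of $\langle\mathbf{x},\mathbf{S}_i\rangle$ plays no role here. $\mathbf{x}_i$ enters the corruption term only through $Y_i$, which is handled simply by $|Y_i|\le c_\epsilon R$.
\end{itemize}
The two-term shape $d\sqrt{q}\,a+qa\sqrt d$ you write down is correct, but both pieces are asserted rather than derived. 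You also never say what happens to the $d\sqrt q$ piece, which does not appear in the stated bound.

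\textbf{The missing idea.} The sign-vector formulation you abandoned is exactly what works, provided the union bound is taken jointly over all $qd$ Rademacher entries rather than user by user. This is what the paper does by reusing the proof of Theorem~\ref{thm:attack}.
\begin{enumerate}
\item Using $|Y_i|\le c_\epsilon R$ and convexity, the corruption term is at most $\frac{2}{n}\max_{\mathcal{C}}\max_{\sigma\in\{\pm c_\epsilon R\}^q}\max_{\mathbf{w}\in\{-1,1\}^d}\sum_{j=1}^d w_j\sum_{i\in\mathcal{C}}\sigma_iS_i(j)$.
\item For fixed $(\mathcal{C},\sigma,\mathbf{w})$ this is a sum of $qd$ independent symmetric $\pm c_\epsilon R$ variables, so Hoeffding gives the tail $\exp(-t^2/(2qdc_\epsilon^2R^2))$.
\item Take a union bound over the $\binom{n}{q}2^{q+d}$ triples. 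Under additive contamination you drop the $\binom{n}{q}$ factor, since $\mathcal{C}$ is independent of the $\mathbf{S}_i$.
\item Lemma~\ref{lem:expect} then gives $c_\epsilon R\sqrt{qd(d+q\ln n)}\lesssim c_\epsilon R(d\sqrt q+q\sqrt{d\ln n})$. The $2^d$ cost is paid once for all $q$ users, which is why you get $d\sqrt q$ rather than $qd$.
\item Since $q\le n$ and $c_\epsilon\lesssim 1/\epsilon$, the piece $c_\epsilon Rd\sqrt q/n\le c_\epsilon Rd/\sqrt n\lesssim dR/(\sqrt n\,\epsilon)$ is absorbed into the first (statistical) term.
\end{enumerate}
What remains is $qc_\epsilon R\sqrt{d\ln n}/n$ under strong contamination, or $qc_\epsilon R\sqrt d/n$ under additive contamination. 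With $R=\beta\sqrt{m\ln n}+1$, this is the stated second term, up to the same ``$+1$'' caveat you already noted.
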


Compared with \eqref{eq:additive} and \eqref{eq:strong} in \autoref{thm:attack}, the second terms in both \eqref{eq:additive-general} and \eqref{eq:strong-general} have an additional $\beta\sqrt{\ln n}$ factor. We therefore get similar findings to the trusted environment: when $\beta\sqrt{\ln n}>1$, the indirect method is better; when $\beta\sqrt{\ln n}<1$, the direct method is more preferred.

Combining the results of both trusted and untrusted enviroments, these theoretical analyses show that the indirect method is more suitable for cases where most nonzero values are close to $1$ or $-1$, while the direct method is suitable for cases where values are close to $0$. 

\section{Attack Strategies}\label{sec:strategy}
This section discusses the attack strategy on existing methods and our new proposed RPC method. We discuss targeted attack first. Suppose that the adversary aims to increase the estimation within $T\subseteq [d]$. The adversary modifies the result $Y_1,\ldots, Y_n$ to $Z_1,\ldots, Z_n$, with $Z_i\neq Y_i$ for at most $q$ samples. 

\subsection{Attack under Additive Contamination Model}
Under additive contamination model, the adversary can not determine the set of corrupted users $\mathcal{C}$ by itself. It can only decide $Z_i$ to maximize the gain of estimation within $T$. The selection of $Z_i$ needs to ensure that $Z_i\subseteq \mathcal{Y}$, which means that the altered value is still within the output space, so that the attacked feedback signal $Z_i$ can not be detected by the server.

\mypara{Attack on PCKV \cite{gu2020pckv}} Recall from \autoref{sec:sampling} that the range of user response $\mathbf{Y}_i\in \{-1,0,1\}^d$. By \eqref{eq:pckv},
\begin{eqnarray}
	\sum_{j\in T} \hat{\mu}(j)-\sum_{j\in T} \hat{\mu}_c(j)= \frac{m}{na(2p-1)}\sum_{i\in \mathcal{C}} \sum_{j\in T}(Z_i(j)-Y_i(j)).
	\label{eq:gain_pckv}
\end{eqnarray}
Under additive contamination model, we can simply let $Z_i(j) = 1$ for all $i\in \mathcal{C}$ and $j\in T$.

\mypara{Attack on Collision \cite{wang2023differentially}}  After adversarial attack, the estimate becomes
\begin{eqnarray}
	\hat{\mu}_j = \frac{\frac{1}{n}\sum_{i=1}^n \mathbf{1}(Z_i=H_i(j)) - \frac{1}{t}}{e^\epsilon/\Omega - 1/t}.
	\label{eq:mucol}
\end{eqnarray}
Suppose that the adversary hope to increase the estimate for all $j\in T$, i.e. to maximize $\sum_{j\in T} (\hat{\mu}(j) - \hat{\mu}_{c}(j))$. By \eqref{eq:mucol} and \eqref{eq:muccol}, 
\begin{eqnarray}
	&&\sum_{j\in T} (\hat{\mu}(j)-\hat{\mu}_c(j))\nonumber\\
	&&\hspace{-5mm}= \frac{1}{n(\frac{e^\epsilon}{\Omega} - \frac{1}{t})}
	\sum_{i\in \mathcal{C}}\sum_{j\in T} (\mathbf{1}(Z_i=H_i(j)) - \mathbf{1}(Y_i=H_i(j))),\nonumber\\
	\label{eq:gain_coll}
\end{eqnarray}
in which $\mathcal{C}$ is the set of corrupted users. Therefore, the objective $\sum_{j\in T} (\hat{\mu}(j) - \hat{\mu}_{c}(j))$ is maximized when
\begin{eqnarray}
	Z_i = \arg\max_k \sum_{j\in T} \mathbf{1}(H_i(j) = k).
\end{eqnarray}

\mypara{Attack on Succinct mechanism \cite{zhou2022locally}} The overall gain of attacks is
\begin{eqnarray}
	&&\sum_{j\in T} \hat{\mu}(j) -\sum_{j\in T} \hat{\mu}_c(j)
	= \frac{1}{n}\sum_{i\in \mathcal{C}}\sum_{j\in T} S_i(j) (Z_{i, h_i(j)} - Y_{i, h_i(j)})\nonumber\\
	&=& \frac{1}{n} \sum_{i\in \mathcal{C}}\left[\sum_{k=1}^b \sum_{j\in T} \mathbf{1}(h_i(j)=k) S_i(j) Z_{ik}-\sum_{j\in T} S_i(j) Y_{i, h_i(j)}\right]. \nonumber\\
	\label{eq:gain_succ}
\end{eqnarray}
Therefore,
\begin{eqnarray}
	Z_{ik} = \left\{
	\begin{array}{ccc}
		c_\epsilon R &\text{if} & \sum_{j\in T} \mathbf{1}(h_i(j) = k)S_i(j)>0\\
		-c_\epsilon R &\text{if} & \sum_{j\in T} \mathbf{1}(h_i(j)=k)S_i(j) <0.
	\end{array}
	\right.
\end{eqnarray}
If $\sum_{j\in T} \mathbf{1}(h_i(j)=k)S_i(j) =0$, then $Z_{ik}$ can take any values.

\mypara{Attack on our new method} Recall that $\hat{\mu}_c = (\alpha/n) \sum_{i=1}^n Y_i\mathbf{S}_i$. From \eqref{eq:rpc},
\begin{eqnarray}
	\sum_{j\in T} (\hat{\mu}(j) -\hat{\mu}_c(j)) = \frac{\alpha}{n} \sum_{i\in \mathcal{C}} \sum_{j\in T} (Z_i-Y_i)S_i(j).
	\label{eq:gain_new}
\end{eqnarray}
 Recall that $Z_i$ takes value from $[-c_\epsilon R, c_\epsilon R]$. Therefore, for any $i\in \mathcal{C}$, the optimal value of $Z_i$ is
\begin{eqnarray}
	Z_i = \left\{
	\begin{array}{ccc}
		c_\epsilon R &\text{if} & \sum_{j\in T} S_i(j) >0\\
		-c_\epsilon R &\text{if} & \sum_{j\in T} S_i(j) <0.
	\end{array}
	\right.
	\label{eq:Zi}
\end{eqnarray}
If $\sum_{j\in T} S_i(j)=0$, then $Z_i$ can be arbitrary value.

\subsection{Attack under Strong Contamination Model}
The adversary is more powerful under strong contamination model. In this case, it can also pick users to attack. To determine the attack strategy, we calculate the gain $G_i$ of attacking each user $i$.

For PCKV, the gain of attacking user $i$ is
\begin{eqnarray}
	G_i = \frac{m}{na(2p-1)} \sum_{j\in T} (1-Y_i(j))
\end{eqnarray}

For Collision, the gain is
\begin{eqnarray}
	G_i &=& \frac{1}{n\left(\frac{e^\epsilon}{\Omega} - \frac{1}{t}\right)} \left[ \max_k \sum_{j\in T}\mathbf{1}(H_i(j)=k)\right.\nonumber\\
	&&\left.- \sum_{j\in T} \mathbf{1}(H_i(j)=Y_i)\right].
\end{eqnarray}

For Succinct Mechanism, the gain is
\begin{eqnarray}
	G_i &=& \frac{1}{n}\sum_{k=1}^b \left|\sum_{j\in T} \mathbf{1}(h_i(j) = k)S_i(j)\right|\nonumber\\
	&&-\frac{1}{n}\sum_{j\in T} S_i(j) Y_{i, h_i(j)}.
\end{eqnarray}

For our new proposed RPC method, according to \eqref{eq:gain_new} and \eqref{eq:Zi}, the effect of manipulating $Z_i$ on the objective $\sum_{j=1}^T (\hat{\mu}(j)-\hat{\mu}_c(j))$ is
\begin{eqnarray}
	G_i = \frac{\alpha}{n} \left[c_\epsilon R\left|\sum_{j\in T} S_i(j)\right| - Y_i \sum_{j\in T} S_i(j)\right].
\end{eqnarray}
Therefore, the adversary can sort all users according to the above quantity, and then let $\mathcal{C}$ be the set of $q$ users with highest values of $G_i$.

The above steps can be used to construct targeted attack, aiming at maximizing the estimation within $T\subseteq [d]$. To construct an optimal untargeted attack, we can just maximize the gain (i.e. \eqref{eq:gain_pckv},\eqref{eq:gain_coll},\eqref{eq:gain_succ}, and \eqref{eq:gain_new}) over all possible targets $T\subseteq [d]$.
\section{Numerical Experiments}\label{sec:numerical}
\subsection{Experiment Setup}\label{sec:setup}

\mypara{Baseline methods} We compare with the following methods:
\begin{itemize}
	\item PCKV \cite{gu2020pckv}. We run both PCKV-UE and PCKV-GRR. These two methods have the same framework, but the perturbation steps are different. We refer to Algorithm 2 and 3 in \cite{gu2020pckv} for detailed algorithms.
	\item Succinct mechanism \cite{zhou2022locally}. There is an optional clipping operation in \cite{zhou2022locally}, depending on whether we pure or approximate LDP is used. Since we use pure LDP for all other algorithms, we include the clipping step for a fair comparison. For detailed procedures, we refer to Algorithm 1 in \cite{zhou2022locally}.
	\item Collision \cite{wang2023differentially}. We use the optimal parameters suggested in \cite{wang2023differentially}, i.e. $t=me^\epsilon + 2m - 1$.
\end{itemize}

\mypara{Datasets} Firstly, we run experiments using synthesized data, in order to show the impact of $m$ on the estimation error, and validate the advantage of our method for general numerical values. Synthesized datasets are constructed as follows.
\begin{itemize}
	\item Synthesized data for frequency estimation. We fix the number of users be $n=10,000$. The dimensionality is fixed at $d=100$. Each user draws $m$ items from $\{1,\ldots, d\}$ without replacement. To evaluate the impact of $m$ on the estimation error of each method, we run experiments with $m=5,10,20,50$ separately.
	\item Synthesized data for mean estimation with values in $\{-1,1\}$. For this dataset, keys are randomly taken in the same way as the frequency estimation task. We then assign values based on keys. For a key $k\in [d]$, if $k\leq d/2$, then the corresponding value is $-1$, otherwise the value is $1$. Therefore, the ground truth is $\mu(j) = -1$ for $j\leq d/2$, and $\mu(j)=1$ for $j>d/2$.
	\item Synthesized data for mean estimation with continuous numerical values. In this experiment, we still generate keys in the same way. The values follow normal distribution $\mathcal{N}(0, \sigma^2)$, truncated to $[-1,1]$. Here we let $\sigma = 0.2$.
\end{itemize}

We then select four real datasets to compare practical performances of all methods. 
\begin{itemize}
	\item Commerce \cite{commerce}. This is a women clothing e-commerce dataset that has ratings provided by customers. The ratings are from $1$ to $5$. We normalize the ratings by dividing $5$. This dataset has $n=23,486$ users in total, and each user has only one item. The dimensionality is $d=1,206$. 
	\item Clothing \cite{clothing}. This is a dataset describing about whether clothes are fit to customers. The feedbacks include 'small', 'fit' and 'large'. We assign value $1$ for 'fit', and $0$ for 'small' and 'large'. The dataset contains $n=105,508$ users. Most users have no more than $2$ items. For users having more than $2$ items, we randomly select two items for calculation.
	\item Amazon \cite{amazon}. This is a dataset that contains the ratings on products. There are $1,210,271$ users in total. Again, for users with more than $2$ items, we randomly select two for calculation.
	\item Movie \cite{movie}. This dataset contains user ratings on movies. There are $138,493$ users in total. For this dataset, the maximum number of items is set to be $100$.
\end{itemize}

\color{black}
\mypara{Randomizers} Among all randomizers $Q$ including Laplace \cite{dwork2006calibrating}, Piecewise\cite{wang2019collecting} and Duchi et al.\cite{duchi2013local},  we pick the randomizer that achieve minimum error, for a fair comparison between our method and earlier methods. In general, when $\epsilon$ is small, Duchi's two point method is better; when $\epsilon$ is large, the piecewise mechanism is better. This can be observed from $V_\epsilon$ in Table 2. If the ratio of corrupted data is large, then Duchi et al.'s method is better, since $c_\epsilon$ is smaller and the algorithm is more robust to attacks.
\color{black}

For all experiments in this paper, we use a simple correction technique. Since the real frequency satisfies $0\leq \mu(j)\leq 1$, we clip the final results with $\hat{\mu}(j)\leftarrow \Clip(\hat{\mu}(j), 0, 1)$. For the mean estimation problem, the ground truth satisfies $\mu(j)\in [-1,1]$, thus we clip the final results with $\hat{\mu}(j)\leftarrow \Clip(\hat{\mu}(j), 1)$.

For all above tasks, we use the mean absolute error (MAE) as the evaluation metric, i.e. $\norm{\hat{\mu}-\mu}_1 / d$. The value of privacy budget $\epsilon$ ranges from $0.5$ to $2.5$.

\subsection{Experiments under Trusted Environments}
We first run experiments without introducing any attacks. The purpose is to show that our RPC method is already a competitive estimator in its own right. In other words, we significantly enhance the robustness without sacrificing the performance in trusted environments.

In all the following figures, black curves represent the results of PCKV-UE \cite{gu2020pckv}. Orange curves correspond to Collision or Coco \cite{wang2023differentially}. Green curves correspond to our new proposed RPC method. Each point denotes the MAE over $M = 500$ random trials. 

\begin{figure*}
	\includegraphics[width=\linewidth]{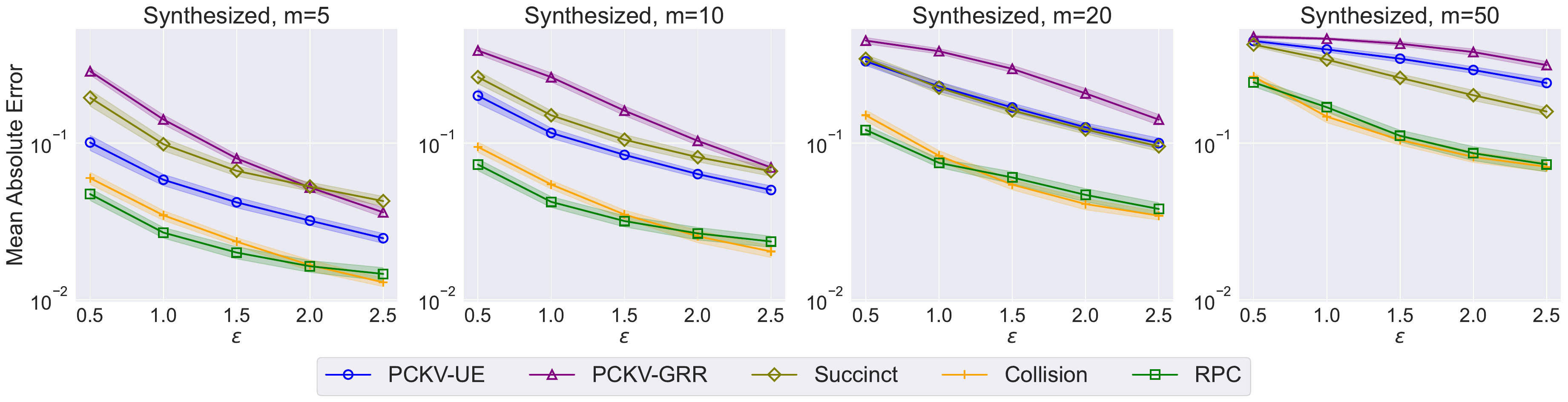}
	\caption{The mean absolute error of frequency estimation in a trusted environment using synthesized data.}\label{fig:freq}
\end{figure*}
\begin{figure*}
	\includegraphics[width=\linewidth]{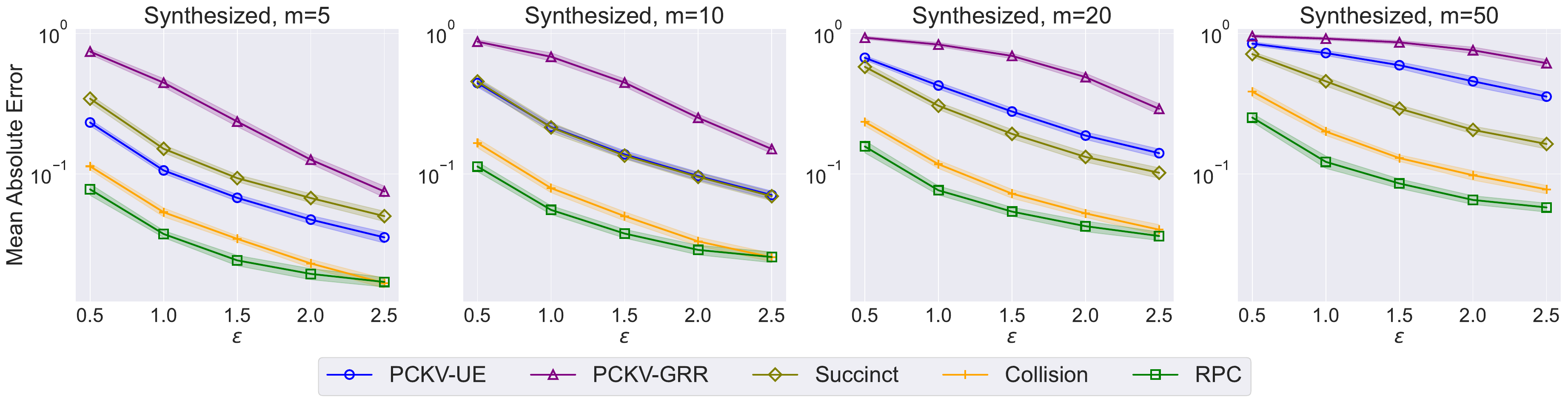}
	\caption{The mean absolute error of mean estimation in a trusted environment, with values being $1$ or $-1$.}\label{fig:mean}
\end{figure*}
\begin{figure*}
	\includegraphics[width=\linewidth]{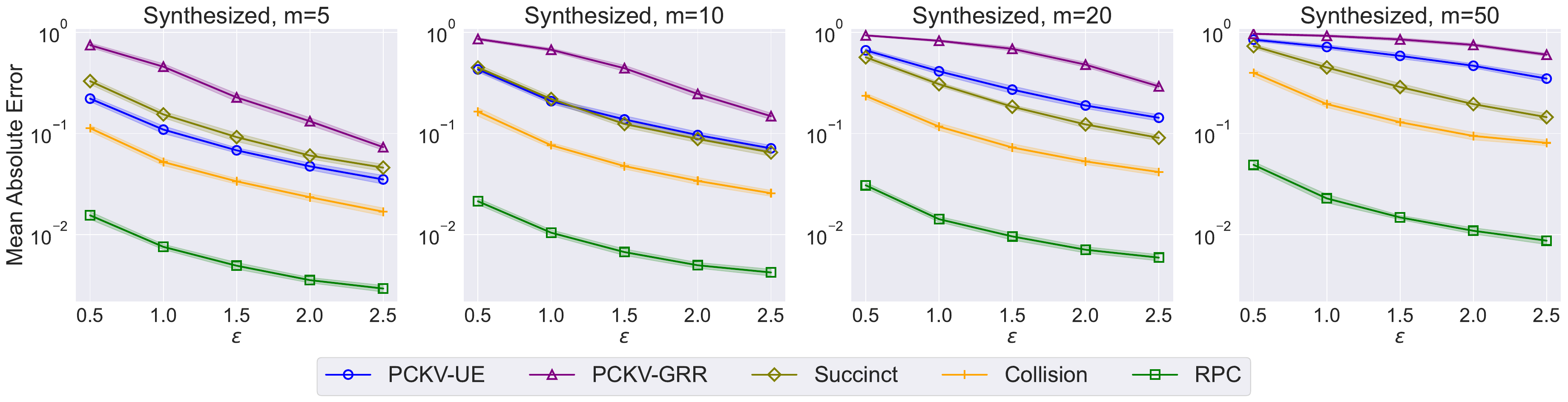}
	\caption{The mean absolute error of mean estimation in a trusted environment, with values following normal distribution with $\sigma=0.2$, truncated to $[-1,1]$. \textcolor{black}{For our method, we use the direct approach in \autoref{sec:extension}, with $\beta = 0.3$.}}\label{fig:meanimb}
\end{figure*}

\begin{figure*}
	\includegraphics[width=\linewidth]{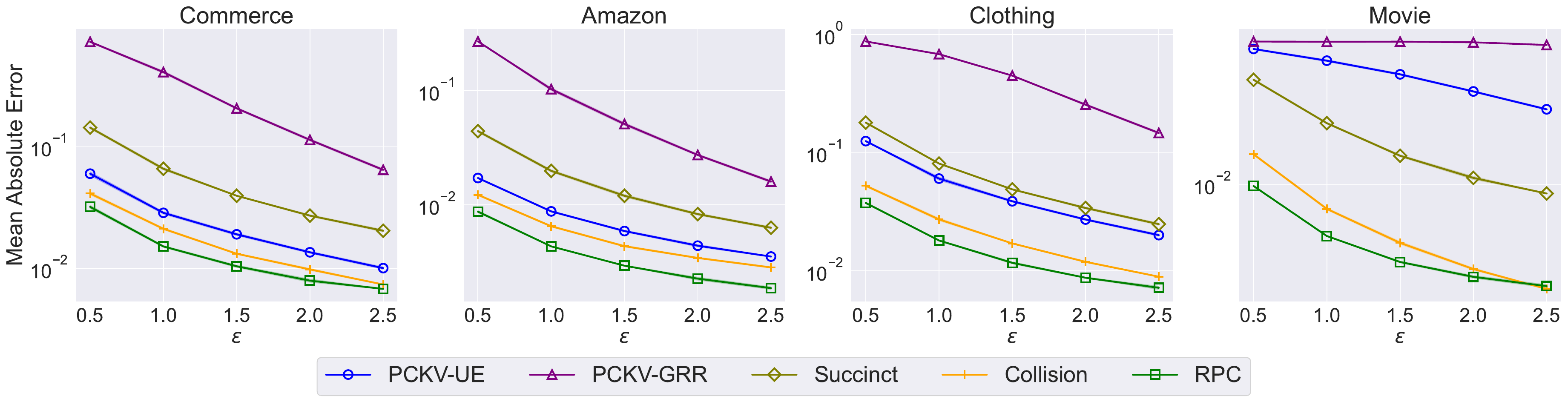}
	\caption{The mean absolute error of mean estimation in a trusted environment for four real datasets.}\label{fig:meanreal}
\end{figure*}
\begin{figure*}[h!]
	\centering
	\includegraphics[width=\linewidth]{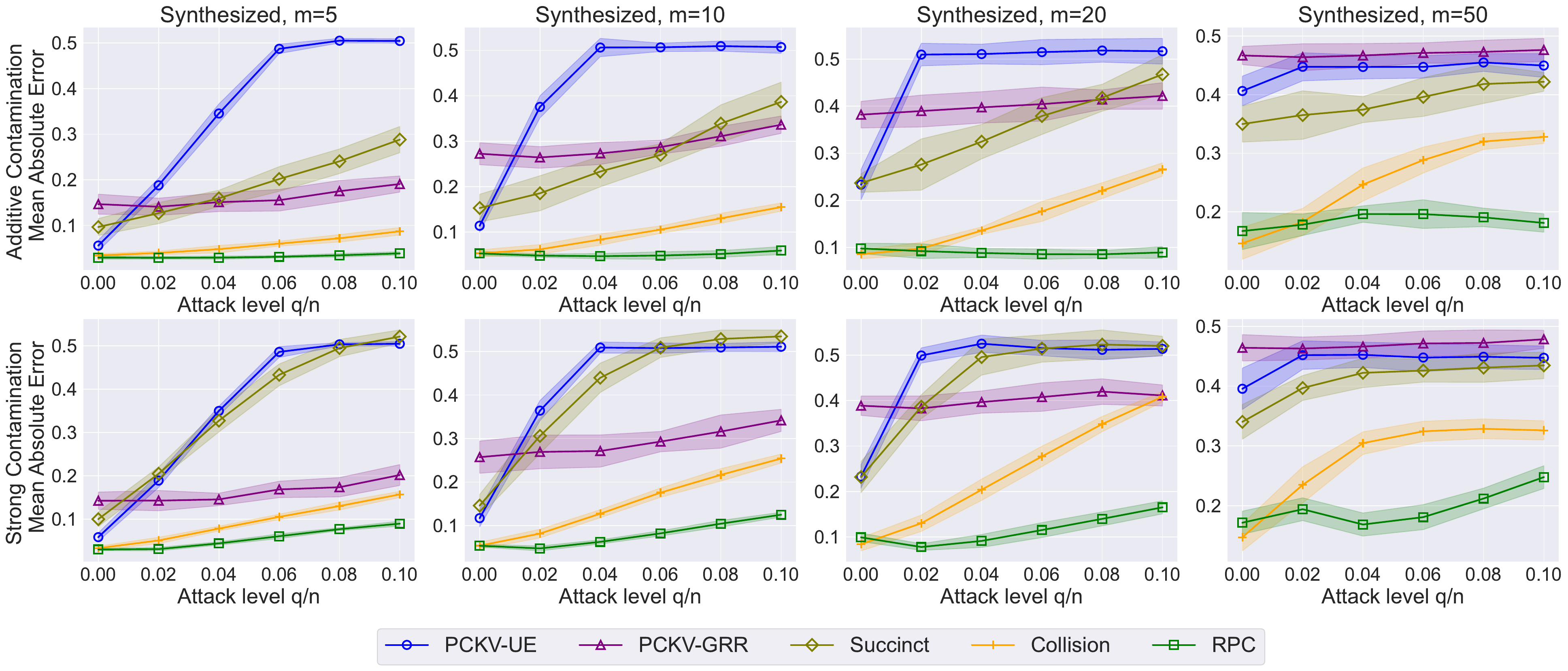}
	\caption{The mean absolute error of frequency estimation in untrusted environment using synthesized data.}\label{fig:freq_attack}
\end{figure*}

\begin{figure*}[h!]
	\centering
	\includegraphics[width=\linewidth]{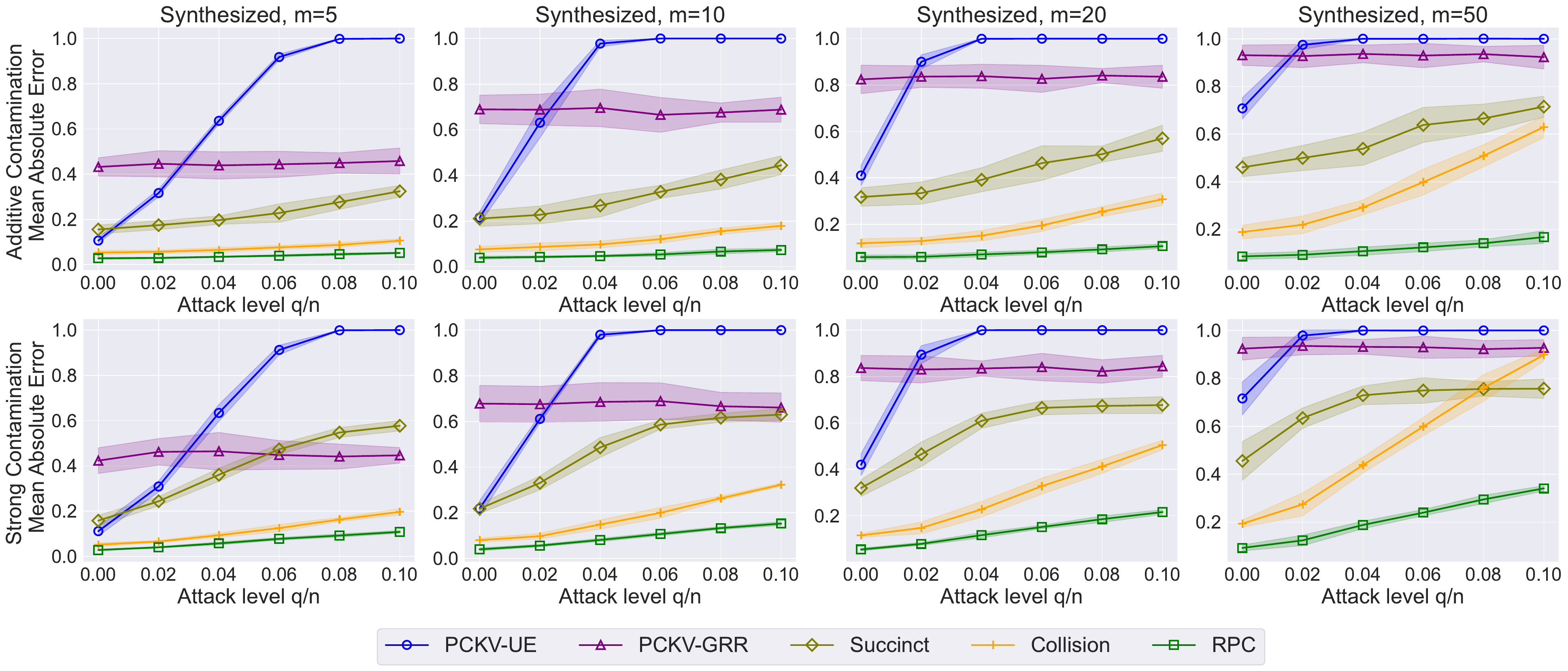}
	\caption{The mean absolute error of mean estimation in untrusted environment using synthesized data.}\label{fig:mean_attack}
	%\vspace{-3mm}
\end{figure*}

\begin{figure*}
	\centering
	\includegraphics[width=\linewidth]{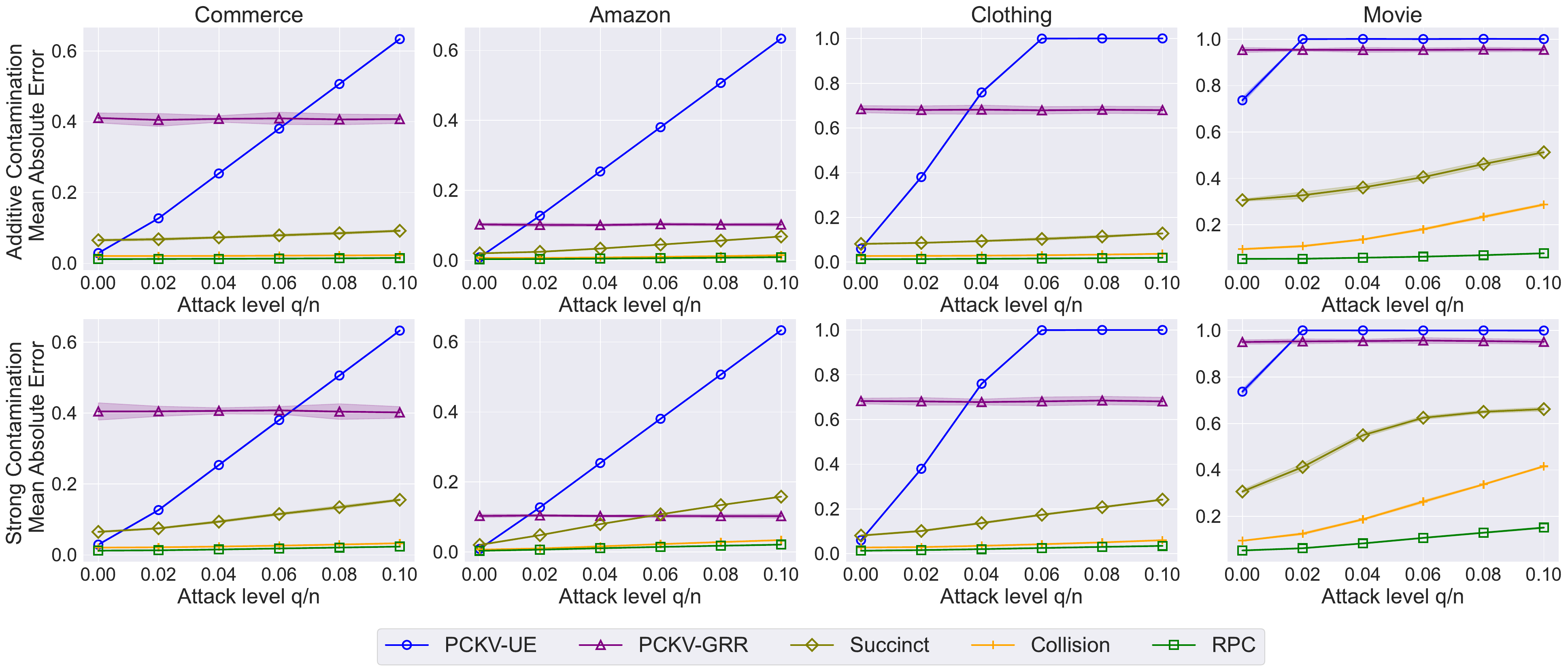}
	\caption{The mean absolute error of mean estimation in untrusted environment for four real datasets.}\label{fig:meanreal_attack}
	\vspace{3mm}
\end{figure*}
\mypara{Frequency estimation} The results are shown in \autoref{fig:freq}. For both PCKV-UE and PCKV-GRR, the errors are relatively large. As discussed earlier, PCKV samples one item from $m$ items belonging to each user, resulting in unnecessary information loss. This inefficiency becomes more obvious as $m$ increases. The succinct mechanism \cite{zhou2022locally} performs slightly worse than PCKV-UE with small $m$, but better than PCKV-UE with large $m$. Our explanation is that on the one hand, succinct mechanism has a hard bias-variance tradeoff. On the other hand, it handles multi-item users in a better way instead of a simple subsampling, which is important with large $m$. Collision \cite{wang2023differentially} significantly improves the performance, as it handles multi-item users in a better way. Our new proposed RPC method achieves comparable performance with Collision in general. To be more precise, when $\epsilon<1$, our method is slightly better. When $\epsilon>2$, Collision is slightly better. Here we provide an intuitive explanation. The variance comes from two parts, including those caused by the LDP mechanism, and those caused by the randomness of predefined signal $\mathbf{S}_i$. In our method, the clipping operation with bias correction effectively controls the first part, i.e. the variance by the LDP mechanism, which is more important under small $\epsilon$. However, as $\epsilon$ increases the randomness of $\mathbf{S}_i$ remain the same, thus the second part does not decrease further with $\epsilon$. Therefore, with large $\epsilon$, the performance of our method is not exactly optimal. In the future, we would like to explore efficient estimators for large $\epsilon$.

\mypara{Mean estimation with values in $\{-1,1\}$} The results are shown in \autoref{fig:mean}. Now our method consistently performs the best among all methods. For Collision \cite{wang2023differentially}, the mean estimation problem is converted to frequency estimation over $2d$ expanded dimensions, which increases the estimation error. In contrast, our method does not expand the dimensionality, and thus avoids the increase of estimation error.

\mypara{Mean estimation with continuous numerical values} The results are shown in \autoref{fig:meanimb}. From the results, now our method significantly outperforms all existing methods, including Collision \cite{wang2023differentially}. The Collision method maps all values within $[-1,1]$ to either $1$ or $-1$, which introduces some additional estimation variance. In contrast, our method calculates the projected value $u_i$ directly. Recall that values are distributed according to $\mathcal{N}(0,\sigma^2)$ with $\sigma = 0.2$. \textcolor{black}{It can be easily shown that \autoref{ass:beta} is satisfied with $\beta = 0.3$, thus we use $\beta=0.3$ in our experiment.} According to the analysis in \autoref{sec:general-trusted}, by adjusting the clipping threshold $R$ to control the bias, our RPC method can now perform better than other solutions based on randomizing values to $-1$ or $1$.

\mypara{Mean estimation using real datasets} Finally, we test the performance of our method using four real datasets: Commerce, Amazon, Clothing and Movie. The details of these datasets are shown in \autoref{sec:setup}. From \autoref{fig:meanreal}, it can be observed that our method consistently outperforms existing approaches.

\subsection{Experiments under Untrusted Environments}

Now we fix $\epsilon = 1$. We run experiments under both additive and strong contamination model. We let the number of attacked users $q$ vary from $0$ to $0.1n$.

\mypara{Frequency estimation} The results are shown in \autoref{fig:freq_attack}. It can be observed that PCKV-UE is not robust to poisoning attacks. For PCKV-GRR, the estimation is relatively large, but increases very slowly with $q$. Our explanation is that the output of generalized randomized response is a single scalar. Due to small output space, the effect of adversarial manipulation is limited, thus PCKV-GRR is actually more robust to attacks. Collision \cite{wang2023differentially} significantly improves over PCKV-UE and PCKV-GRR. However, with large $m$, the error still increases significantly with $q$. Compared with existing methods, our RPC method achieves both small estimation error at $q=0$, and slow growth with the increase $q$, indicating that our method is both accurate and more robust to poisoning attacks.

\mypara{Mean estimation using synthesized datasets} The results are shown in \autoref{fig:mean_attack}. The observations are similar to the frequency estimation task. These results show that compared with existing approaches, our new method is more robust to poisoning attacks.

\mypara{Mean estimation using real datasets} The results are shown in \autoref{fig:meanreal_attack}. It can be observed that with the increase of $q$, the error of PCKV-UE increases quickly to $1$, indicating that PCKV-UE is not robust to poisoning attacks. For Commerce, Amazon and Clothing, both Collision and our RPC method exhibit good robustness against poisoning attacks. Note that these three datasets have a relatively small $m$. For the movie dataset, which has a large $m$ ($m=100$), Collision is much more vulnerable to poisoning attacks, while our method performs relatively stable with $q$ increases.
\section{Related Works}\label{sec:related}

\subsection{Single-item Frequency Estimation}
Frequency estimation under LDP is a fundamental problem that received widespread attension. The earliest attempt can be traced back to Warner's randomized response \cite{warner1965randomized}, for the case with $d=2$. For larger $d$, a standard method is RAPPOR \cite{erlingsson2014rappor}. After that, \cite{wang2017locally} proposed Optimized Unary Encoding (OUE) and Optimized Local Hashing (OLH). \textcolor{black}{\cite{bassily2015local} proposed succinct histograms.} These methods achieve optimal error under some range of $\epsilon$. Several subsequent works achieve optimal error for all $\epsilon$. \cite{wang2016mutual} and \cite{ye2018optimal} proposed the subset selection method. \cite{acharya2019hadamard} proposed Hadamard response, which further reduces the communication complexity. Moreover, \cite{fang2025further} gives some further improvement to existing methods; \cite{li2020estimating,zhao2026consistent} analyzed distribution estimation for numerical data. There are also some works on heavy hitter estimation \cite{boneh2021lightweight,wang2019locally}, which is closely related to frequency estimation. 
%\vspace{-3mm}

\subsection{Vector Mean Estimation}

There are several early works on mean estimation over dense vectors \cite{duchi2013local,duchi2018minimax,nguyen2016collecting,bhowmick2018protection,asi2022optimal,asi2024fast,wang2019collecting}. These works do not consider sparse vectors. \cite{chen2022breaking} proposes a method for the $1$-sparse case with optimal error and succinct communication. The general $m$-sparse case is much more challenging. \cite{qin2016heavy} proposed sampling based methods for frequency estimation. Later on, PrivKVM \cite{ye2019privkv,ye2021privkvm} and PCKV \cite{gu2020pckv} are proposed to handle general numerical values. However, these methods work by sampling one item from $m$ items, which leads to information loss, and thus the performances are suboptimal.

In recent years, there are several new methods that use all items within a user efficiently. The Wheel method \cite{wang2018privset} was proposed for frequency estimation for $m$-sparse vectors. \cite{zhou2022locally} proposed Succinct mechanism for general sparse vectors. \cite{wang2023differentially} further proposed the Collision method to handle sparse vector aggregation, and the analysis is extended to shuffle model \cite{erlingsson2019amplification,feldman2022hiding,feldman2023stronger}. Recently, \cite{zhao2026sparse} significantly improved the performance of sparse frequency and mean estimation for cases $\epsilon \geq 1$, which is actually a modification of existing subset selection algorithms \cite{ye2018optimal,wang2016mutual}.

\subsection{Attacks and Defense on LDP Mechanisms}

It is well known that LDP mechanisms are vulnerable to poisoning attacks \cite{cheu2021manipulation,cao2021data,wu2022poisoning}. \cite{cao2021data} proposed three common attacks: Random Perturbation Attack (RPA), Random Item Attack (RIA), and Maximum Gain Attack (MGA). \cite{wu2022poisoning} extends the work to key-value data. \cite{li2023fine} proposed an attack that can distort both mean and variance estimation. \cite{tong2024data} studied attacks on frequent itemset mining.

For defense strategies, \cite{cao2021data} proposed several defenses against RPA, RIA and MGA \cite{huang2024ldpguard} proposed LDPGuard. However, these methods rely on the knowledge of the attack strategy. There are also some works based on post-processing, such as LDPRecover \cite{sun2024ldprecover} and Calibrate \cite{jia2019calibrate}. These defenses are lack of rigorous theoretical guarantee, and may still be circumvented by a carefully designed attack strategy. \cite{cheu2021manipulation} proposed a general scheme for robust LDP mechanisms for $\epsilon\leq 1$. \cite{zhao2025attack} extended the work to cases with $\epsilon>1$.
\section{Conclusion}\label{sec:conc}

Robustness against poisoning attacks is a prominent issue faced by LDP mechanisms. In this paper, our goal is to achieve robust estimation of the mean of sparse vectors. We propose the random projection with clipping method. The basic idea is to project each vector onto a randomly generated binary vector, and then clip the projected value. The clipping process induces bias. However, we provide a careful analysis, which shows that the bias can be corrected by multiplying a factor $\alpha$. As a result, we can use a small clipping threshold without while still ensuring unbiasedness. As a result, a weaker noise is enough for privacy protection. We establish an upper bound of the estimation error under poisoning attacks for the first time. Experiments also show that our method is significantly less vulnerable to attacks than existing methods.

While our method has achieved superior performance, currently we only focus on cases with $\epsilon = O(1)$. In the future, we would like to explore optimal robust estimation methods under $\epsilon$-LDP with large $\epsilon$.

\section*{Ethical Considerations}
This work does not present any ethical issues. The paper focuses on sparse estimation under local differential privacy. All experiments are conducted using either synthesized or public datasets, ensuring no exposure to personal information. Moreover, we strictly adhere to ethical guidelines throughout the research process, ensuring that no other ethical concerns arise.

\section*{Open Science}
To facilitate transparency and reproducibility, we provide the artifacts necessary to evaluate the results and claims made in thispaper, All artifacts are made available through anonymized links, and do not reveal author identities.

\mypara{Provided Artifacts} These artifacts associated with this submission include:
\begin{itemize}
    \item Source codes of our new proposed RPC method;

    \item Codes used for environment setting;

    \item Experimental results.
\end{itemize}

\mypara{Artifact Access}.  All code and documentation are available via an anonymous repository: 
\begin{itemize}
    \item https://anonymous.4open.science/r/robust\_sparse\_estimation-C765/
\end{itemize}

\bibliographystyle{ACM-Reference-Format}
\bibliography{ldp}	
\appendix
\section{Discussions of Randomizers}\label{sec:discuss}

Here we analyze the values of $c_\epsilon$ and $V_\epsilon$ of randomizers listed in \autoref{sec:randomizer}. Recall that $c_\epsilon$ and $V_\epsilon$ are defined in \eqref{eq:cepsdf} and \eqref{eq:vepsdf}, respectively.

\mypara{Laplace mechanism \cite{dwork2006calibrating}} Recall that
$Y=u+W$, with $W$ following Laplace distribution $\Lap(2/\epsilon)$. $W$ is unbounded, thus from \eqref{eq:cepsdf}, $c_\epsilon=\infty$.

Note that the probability density function (pdf) of $\Lap(\lambda)$ is $f(u)=e^{-|u|/\lambda}/(2\lambda)$. It can be shown that the variance is $2\lambda^2$. Therefore
\begin{eqnarray}
	\mathbb{E}[W] = 2\left(\frac{2}{\epsilon}\right)^2 = \frac{8}{\epsilon^2}.
\end{eqnarray}
From \eqref{eq:vepsdf}, $V_\epsilon = 8/\epsilon^2$.

\mypara{Piecewise mechanism (PM) \cite{wang2019collecting}} 

For PM, by \eqref{eq:pmpdf},
\begin{eqnarray}
	c_\epsilon = b = \frac{e^\frac{\epsilon}{2}+1}{e^\frac{\epsilon}{2} - 1},
\end{eqnarray}
and 
\begin{eqnarray}
V_\epsilon = \frac{4e^\frac{\epsilon}{2}}{3(e^\frac{\epsilon}{2} - 1)^2}. 	
\end{eqnarray}

\mypara{Duchi et al.\cite{duchi2013local}} 
By \eqref{eq:duchi},
\begin{eqnarray}
	c_\epsilon = \frac{e^\epsilon+1}{e^\epsilon - 1},
\end{eqnarray}
and
\begin{eqnarray}
	V_\epsilon = \left(\frac{e^\epsilon+1}{e^\epsilon - 1}\right)^2.
\end{eqnarray}

\section{Sampling-based Estimators}\label{sec:sampling}

The first type of existing solutions is based on sampling one item from all items belonging to a user.

\mypara{PrivKVM \cite{ye2019privkv,ye2021privkvm}} This method runs in multiple iterations. In the first iteration, each user randomly samples $j\in [d]$. If $x_i(j)\neq 0$ (which means that user $i$ contains key $j$), then it uploads a key-value pair $(1, x_i(j))$. Otherwise, it uploads $(0, \tilde{v})$, in which $\tilde{v}=0$ in the first iteration. The key is then perturbed with budget $\epsilon /2$, and the value is perturbed with budget $\epsilon / (2c)$, in which $c$ is the number of iterations. In the remaining iterations, each user perturbs its data with similar way, but now $\tilde{v}=m_k$, which is the estimated mean value from previous rounds. The value is then perturbed with budget $\epsilon/(2c)$. After $c$ rounds of interaction between users and the server, the whole privacy consumption is $\epsilon$. Moreover, \cite{ye2019privkv} shows that with a proper aggregation, the final estimation is approximately unbiased.

\mypara{Private Correlated Key-Value (PCKV) \cite{gu2020pckv}} For each user $i$, the encoding step outputs $(k_i, v_i)$, in which $k_i$ is randomly drawn from indices of nonzero components of $\mathbf{x}_i$, and $v_i=x_i(k_i)$. For the perturbation step, there are various randomizers. Here we present PCKV-UE as an example. The user $i$ generates $\mathbf{y}_i\in \{-1,0,1\}^d$ as follows. $y_i(k_i)$ is generated by
\begin{eqnarray}
	y_i(k_i) = \left\{
	\begin{array}{ccc}
		v_i &\text{with probability} & ap\\
		-v_i &\text{with probability} & a(1-p)\\
		0 &\text{with probability} & 1-a.		
	\end{array}
	\right.
\end{eqnarray}
For $j\neq k_i$,
\begin{eqnarray}
	y_i(j) = \left\{
	\begin{array}{ccc}
		1 &\text{with probability} & b/2\\
		-1 &\text{with probability} & b/2\\
		0 &\text{with probability} & 1-b.		
	\end{array}
	\right.
\end{eqnarray}
In the aggregation step,
\begin{eqnarray}
	\hat{\mu}_c(j) = \frac{m}{na(2p-1)} \sum_{i=1}^n y_i(j).
	\label{eq:pckv}
\end{eqnarray}
The values of $a,b$ need to satisfy
$a(1-b)/(b(1-a)) = e^{\epsilon_1}$,
and
$p = e^{\epsilon_2}/(e^{\epsilon_2} + 1)$.
Then the whole algorithm is $\epsilon$-LDP with
	$\epsilon = \max\left\{\epsilon_2, \epsilon_1 + \ln 2/(1+e^{-\epsilon_2}) \right\}$.

For all methods based on sampling, each user only sends a small fraction of information to the server. The bound of $\ell_1$ error is $O(\sqrt{d^3/(n\epsilon^2)})$ for PrivKVM, and $O(dm/\sqrt{n\epsilon^2})$ for PCKV\footnote{The derivation of these bounds are omitted due to limited space.}, while the theoretical lower bound is $O(d\sqrt{m/(n\epsilon^2)})$, indicating a room for improvement. More importantly, as will be shown later, these methods are vulnerable to poisoning attacks.

\section{Proof of \autoref{lem:unbiased}}\label{sec:unbiased}
Recall that $Q$ denotes arbitrary randomizer described in \autoref{sec:randomizer}. Laplace mechanism, PM and Duchi et al.'s method \cite{duchi2013local} are all unbiased, thus $\mathbb{E}[Y_i|u_i] = u_i$. Therefore
\begin{eqnarray}
	\mathbb{E}[\hat{\mu}_c] &=& \mathbb{E}\left[\frac{1}{n}\sum_{i=1}^n u_i\mathbf{S}_i\right]\nonumber\\
	&=& \mathbb{E}\left[\frac{1}{n}\sum_{i=1}^n \langle \mathbf{x}_i, \mathbf{S}_i\rangle \mathbf{S}_i\right].
\end{eqnarray}
Hence
\begin{eqnarray}
	\mathbb{E}[\hat{\mu}_c(j)] = \mathbb{E}\left[\frac{1}{n}\sum_{i=1}^n \sum_{l=1}^d x_i(l)S_i(l)S_i(j)\right].
\end{eqnarray}
Note that $\mathbf{S}_i$ is a random vector that take values uniformly from $\{-1,1\}^d$, thus
\begin{eqnarray}
	\mathbb{E}[S_i(l)S_i(j)] = \left\{
	\begin{array}{ccc}
		1 &\text{if} & l=j\\
		0 &\text{if} & l\neq j.
	\end{array}
	\right.
\end{eqnarray}
Therefore 
\begin{eqnarray}
	\mathbb{E}[\hat{\mu}_c(j)] = \frac{1}{n}\sum_{i=1}^n x_i(j) = \mu(j),
\end{eqnarray}
i.e.
\begin{eqnarray}
	\mathbb{E}[\hat{\mu}_c] = \mu.
\end{eqnarray}
The proof of \autoref{lem:unbiased} is complete.
\section{Proof of \autoref{lem:clip}}\label{sec:clip}
Define
\begin{eqnarray}
	\hat{\mu}_0 = \frac{\alpha}{n}\sum_{i=1}^n u_i \mathbf{S}_i.
	\label{eq:mu0}
\end{eqnarray}
From \eqref{eq:mu0},
\begin{eqnarray}
	&&\frac{1}{n} \sum_{i=1}^n u_i\mathbf{S}_i - \mu(j) \nonumber\\
	&=& \frac{1}{n}\sum_{i=1}^n u_iS_i(j) - \frac{1}{n}\sum_{i=1}^n x_i(j)\nonumber\\
	&=& \frac{1}{n}\sum_{i=1}^n \left[\Clip(\langle \mathbf{x}_i, \mathbf{S}_i\rangle, R)S_i(j) - x_i(j)\right] \nonumber\\
	&=& \frac{1}{n}\sum_{i=1}^n \left[\Clip(\langle \mathbf{x}_i,\mathbf{S}_i\rangle s_i(j), R)-x_i(j)\right]\nonumber\\
	&=& \frac{1}{n} \sum_{i=1}^n \left[\Clip(x_i(j)+U_{ij}, R) - x_i(j)\right],
\end{eqnarray}
in which
\begin{eqnarray}
	U_{ij} := \sum_{l\neq j} x_i(l)s_i(l)s_i(j).
\end{eqnarray}
If $x_i(j) = 0$, then 
\begin{eqnarray}
	\mathbb{E}[\Clip(x_i(j)+U_{ij}, R) - x_i(j)] = 0.
\end{eqnarray}
If $x_i(j)\in \{-1,1\}$, then among all indices $l$ with $l\neq j$, there are $m-1$ remaining nonzero values. Therefore, $U_{ij}$ is the sum of $m-1$ Rademacher random variables\footnote{Rademacher random variable has $1/2$ probability at $-1$ and $1$, respectively.}. 

If $x_i(j)=-1$, then
%Without loss of generality, assume that $x_i(j)=-1$, and the case with $x_i(j)=1$ can be analyzed similarly. 
\begin{eqnarray}
	&&\mathbb{E}\left[\Clip(x_i(j)+U_{ij}, R) - x_i(j)\right]\nonumber\\
	&& = \mathbb{E}[\Clip(U_{ij}, -R+1, R+1)],
\end{eqnarray}
in which $\Clip(u,a,b) = \max\{a, \min\{u, b\}\}$. 

Similar results can be derived for $x_i(j)=1$:
\begin{eqnarray}
	&&\mathbb{E}\left[\Clip(x_i(j)+U_{ij}, R) - x_i(j)\right]\nonumber\\
	&& = \mathbb{E}[\Clip(U_{ij}, -R-1, R-1)].
\end{eqnarray}
Therefore
\begin{eqnarray}
	&&\mathbb{E}\left[\frac{1}{n}\sum_{i=1}^n Y_i\mathbf{S}_i(j) \right] - \mu(j) \nonumber\\
	&=& \frac{1}{n} \sum_{i=1}^n \mathbf{1}(x_i(j) = 1)\mathbb{E}\left[\Clip(U_{ij}, -R-1, R-1)\right] \nonumber\\
	&&+ \frac{1}{n} \sum_{i=1}^n \mathbf{1}(x_i(j) = -1)\mathbb{E}\left[\Clip(U_{ij}, -R+1, R+1)\right]\nonumber\\
	&=&-\frac{1}{n} \sum_{i=1}^n \mathbf{1}(x_i(j) = 1)\mathbb{E}\left[\Clip(U_{ij}, -R+1, R+1)\right] \nonumber\\
	&&+ \frac{1}{n} \sum_{i=1}^n \mathbf{1}(x_i(j) = -1)\mathbb{E}\left[\Clip(U_{ij}, -R+1, R+1)\right]\nonumber\\
	&=& -\mu(j) \mathbb{E}\left[\Clip(U_{ij}, -R+1, R+1)\right].
\end{eqnarray}
Reorganize the above equation, we can get
\begin{eqnarray}
	\mu(j) = \mathbb{E}\left[\frac{1}{1-\mathbb{E}[\Clip(U_{ij}, -R+1,R+1)]} \frac{1}{n}\sum_{i=1}^n Y_i\mathbf{S}_i(j)\right].\hspace{-1cm}\nonumber\\
\end{eqnarray}
Hence, with 
\begin{eqnarray}
	\alpha = \frac{1}{1-\mathbb{E}[\Clip(U_{ij}, -R+1,R+1)]},
	\label{eq:alphae}
\end{eqnarray}
$\hat{\mu}_0$ defined in \eqref{eq:mu0} is unbiased. 

Finally, we discuss the exact calculation of the value of $\alpha$. Note that $\mathbb{E}[\Clip(U_{ij}, -R+1,R+1)]$ can be calculated analytically. Recall that $U_{ij}$ can be viewed as the sum of $m-1$ Rademacher random variables, thus
\begin{eqnarray}
	&&\mathbb{E}[\Clip(U_{ij}, -R+1,R+1)] \nonumber\\
	&&= \sum_{l=0}^{m-1} \binom{m-1}{l} 2^{-(m-1)} C_{l,m, R},
\end{eqnarray}
in which $C_{l,m,R} = \Clip(2l+1-m, -R+1, R+1)$. The proof of \autoref{lem:clip} is complete.

\section{Estimation Error in Trusted Environments}\label{sec:rpc}
This section proves \autoref{thm:rpc} and \autoref{cor}.

\subsection{Proof of \autoref{thm:rpc}}
Recall the definition of $\hat{\mu}_0$ in \eqref{eq:mu0}. The $\ell_2$ error can be decomposed as follows.
\begin{eqnarray}
	\mathbb{E}\left[\norm{\hat{\mu}_c-\mu}_2^2\right] &=& \mathbb{E}\left[\norm{\hat{\mu}_c - \hat{\mu}_0}_2^2\right] + \mathbb{E}\left[\norm{\hat{\mu}_0-\mu}_2^2\right] \nonumber\\
	&&+ \mathbb{E}\left[\langle \hat{\mu}_c-\hat{\mu}_0, \hat{\mu}_0-\mu\rangle\right].
\end{eqnarray}
Denote $y_{1:n} = (y_1,\ldots, y_n)$ for convenience. Note that
\begin{eqnarray}
	&&\mathbb{E}\left[\langle \hat{\mu}_c-\hat{\mu}_0,\hat{\mu}_0-\mu\rangle |y_{1:n}\right] \nonumber\\
	&&= \mathbb{E}\left[\left\langle \frac{\alpha}{n}\sum_{i=1}^n (Y_i-y_i)\mathbf{S}_i, \hat{\mu}_0-\mu\right\rangle|y_{1:n}\right] = 0.  \nonumber\\
\end{eqnarray}
Therefore
\begin{eqnarray}
	\mathbb{E}\left[\norm{\hat{\mu}_c-\mu}_2^2\right] = \mathbb{E}\left[\norm{\hat{\mu}_c - \hat{\mu}_0}_2^2\right] + \mathbb{E}\left[\norm{\hat{\mu}_0-\mu}_2^2\right].
	\label{eq:l2dec}
\end{eqnarray}
We bound two terms in \eqref{eq:l2dec} separately. 

\textbf{Bound of $\mathbb{E}\left[\norm{\hat{\mu}_c-\hat{\mu}_0}_2^2\right]$.}
\begin{eqnarray}
	\mathbb{E}\left[\norm{\hat{\mu}_c-\hat{\mu}_0}_2^2\right]  &=& \mathbb{E}\left[\norm{\frac{\alpha}{n}\sum_{i=1}^n (Y_i-y_i)\mathbf{S}_i}_2^2\right]\nonumber\\
	&=& \frac{\alpha^2}{n^2} \sum_{i=1}^n \mathbb{E}\left[\norm{(Y_i-y_i)\mathbf{S}_i}_2^2\right]\nonumber\\
	&=& \frac{\alpha^2}{n^2} \sum_{i=1}^n \sum_{j=1}^d \Var[Y_i|y_i]\nonumber\\
	&=& \frac{\alpha^2}{n}V_\epsilon R^2 d,
	\label{eq:l21}
\end{eqnarray}
in which $V_\epsilon$ depends on the LDP protocol, which has been discussed in \autoref{sec:randomizer}. Under the strong privacy regime ($\epsilon = O(1)$), for Laplace mechanism, PM and Duchi et al.'s method \cite{duchi2013local}, $V_\epsilon=O(1/\epsilon^2)$.

\textbf{Bound of $\mathbb{E}[\norm{\hat{\mu}_0-\mu}_2^2]$.} From \eqref{eq:mu0},
\begin{eqnarray}
	\hat{\mu}_0(j) = \frac{\alpha}{n} \sum_{i=1}^n \Clip(\langle \mathbf{x}_i, \mathbf{S}_i\rangle, R) S_i(j).
\end{eqnarray}
The variance of each term is bounded by
\begin{eqnarray}
	&&\Var[\Clip(\langle \mathbf{x}_i, \mathbf{S}_i\rangle, R)s_i(j)] \nonumber\\
	&=& \Var[\Clip(\langle \mathbf{x}_i, \mathbf{S}_i\rangle s_i(j), R)]\nonumber\\
	&\leq & \Var\left[\langle \mathbf{x}_i, \mathbf{S}_i\rangle S_i(j)\right]\nonumber\\
	&=& \Var\left[\sum_{l=1}^d x_i(l) s_i(l)s_i(j)\right]\nonumber\\
	&=& \Var\left[\sum_{l\neq j} x_i(l)s_i(l)s_i(j) \right]\nonumber\\
	&=&\sum_{l \neq j} x_i^2(l)\nonumber\\
	&\leq & m.
\end{eqnarray}
Therefore
\begin{eqnarray}
	\Var[\hat{\mu}_0(j)]\leq \frac{\alpha^2 m}{n}.
\end{eqnarray}
It has been proved in \autoref{lem:unbiased} that $\hat{\mu}_0$ is unbiased. Thus
\begin{eqnarray}
	\mathbb{E}\left[\norm{\hat{\mu}_0-\mu}_2^2\right] = \sum_{j=1}^d \Var[\hat{\mu}_0(j)] = \frac{1}{n}\alpha^2 m d.
	\label{eq:l22}
\end{eqnarray}
Combine \eqref{eq:l2dec}, \eqref{eq:l21} and \eqref{eq:l22},
\begin{eqnarray}
	\mathbb{E}\left[\norm{\hat{\mu}_c-\mu}_2^2\right] \leq \frac{\alpha^2}{n}(V_\epsilon R^2 + m)d.
\end{eqnarray}
By Cauchy-Schwartz inequality,
\begin{eqnarray}
	\mathbb{E}\left[\norm{\hat{\mu}_c-\mu}_1\right] &\leq& \sqrt{d \mathbb{E}\left[\norm{\hat{\mu}_c-\mu}_2^2\right]}\nonumber\\
	&\leq & d\alpha \sqrt{\frac{V_\epsilon R^2+m}{n}}
\end{eqnarray}
The proof of \autoref{thm:rpc} is complete.

\subsection{Proof of \autoref{cor}}
Recall that $U_{ij}$ is the sum of $m-1$ independent Rademacher random variables. For any $R$ taking integer values,
\begin{eqnarray}
	&&\mathbb{E}[\Clip(U_{ij}, -R+1, R+1)]\nonumber\\
	&=& \mathbb{E}[U_{ij} \mathbf{1}(-R+1\leq U_{ij}\leq R+1)]\nonumber\\
	&& - (R-1)\text{P}(U_{ij}<-R+1)\nonumber\\
	&&+(R+1)\text{P}(U_{ij} > R+1)\nonumber\\
	&\overset{(a)}{=} & \mathbb{E}[U_{ij} \mathbf{1}(R-1<U_{ij}\leq R+1)]\nonumber\\
	&& -(R-1) \text{P}(U_{ij}<-R-1)\nonumber\\
	&& -(R-1)\text{P}(-R-1\leq U_{ij} <-R+1)\nonumber\\
	&&+(R+1) \text{P}(U_{ij} > R+1)\nonumber\\
	&\overset{(b)}{=}& \mathbb{E}[U_{ij} \mathbf{1}(R-1<U_{ij}\leq R+1)] + 2\text{P}(U_{ij} > R+1) \nonumber\\
	&&- (R-1)\text{P}(R-1<U_{ij} \leq R+1)\nonumber\\
	&\overset{(c)}{\leq}& 2\text{P}(R-1<U_{ij} \leq R+1)+2\text{P}(U_{ij} > R+1)\nonumber\\
	&=& 2\text{P}(U_{ij} > R-1) \nonumber\\
	&\overset{(d)}{=}& 2\text{P}(U_{ij} \geq R)\nonumber\\
	&\overset{(e)}{\leq} & 2e^{-\frac{R^2}{2m}}.
\end{eqnarray}
In the above steps, (a) holds because by the symmetry of $U_{ij}$,
\begin{eqnarray}
	\mathbb{E}[U_{ij} \mathbf{1}(-R+1\leq U_{ij}\leq R-1)] = 0.
\end{eqnarray}
(b) also comes from the symmetry of $U_{ij}$, i.e.
\begin{eqnarray}
	\text{P}(-R-1\leq U_{ij} <-R+1)=\text{P}(R-1<U_{ij}\leq R+1).
\end{eqnarray}
(c) holds since when $R-1<U_{ij}\leq R+1$, $U_ij-(R-1)\in [0,2]$. (d) holds because $R$ is an integer, and $U_{ij}$ only takes integer values, thus $U_{ij}>R-1$ is equivalent to $U_{ij}>R$. (e) comes from Hoeffding's inequality \cite{boucheron2003concentration}.

With $R=\lceil \sqrt{2m}\rceil$, 
\begin{eqnarray}
	\mathbb{E}[\Clip(U_{ij}, -R+1, R+1)] \leq 2e^{-1}.
\end{eqnarray}
By \eqref{eq:alphae},
\begin{eqnarray}
	\alpha \leq \frac{1}{1-2e^{-1}}.
\end{eqnarray}
The proof of \autoref{cor} is complete.

\section{Proof of \autoref{thm:attack}}\label{sec:attack}

\color{black}In this section, we first analyze under the strong contamination model, and then show that the $\sqrt{\ln n}$ factor can be removed under the additive contamination model.\color{black}

\mypara{Under strong contamination model}
From triangle inequality,
\begin{eqnarray}
	\norm{\hat{\mu}-\mu}_1\leq \norm{\hat{\mu}-\hat{\mu}_c}_1 + \norm{\hat{\mu}_c-\mu}_1.
    \label{eq:decomp}
\end{eqnarray}
The second term has been analyzed in \autoref{thm:rpc}. It remains to bound the first term. Recall that $\hat{\mu}_c = (\alpha/n) \sum_{i=1}^n Y_i\mathbf{S}_i$. From \eqref{eq:rpc},
\begin{eqnarray}
	&&\mathbb{E}\left[\max_{Z_{1:n}\in \mathcal{Z}} \norm{\hat{\mu}-\hat{\mu}_c}_1\right]\nonumber\\
	&=& \mathbb{E}\left[\max_{Z_{1:n}\in \mathcal{Z}} \norm{\frac{\alpha}{n}\sum_{i=1}^n (Z_i-Y_i)\mathbf{S}_i}_1\right]\nonumber\\
	&\overset{(a)}{=}& \mathbb{E}\left[\max_{Z_{1:n}\in \mathcal{Z}^*} \norm{\frac{\alpha}{n}\sum_{i=1}^n (Z_i-Y_i)\mathbf{S}_i}_1\right]\nonumber\\
	&\overset{(b)}{=}& \mathbb{E}\left[\max_\mathcal{C}\max_{Z_C} \norm{\frac{\alpha}{n}\sum_{i=1}^n (Z_i-Y_i)\mathbf{S}_i}_1\right] \nonumber\\
	&\leq & \mathbb{E}\left[\max_\mathcal{C}\max_{Z_C} \norm{\frac{\alpha}{n}\sum_{i\in \mathcal{C}} Z_i\mathbf{S}_i}_1\right] + \mathbb{E} \left[\norm{\frac{\alpha}{n} \sum_{i\in \mathcal{C}} Y_i\mathbf{S}_i}_1\right]\nonumber\\
	&\leq & \frac{2\alpha}{n}\mathbb{E}\left[\max_\mathcal{C}\max_{Z_C} \norm{\sum_{i\in \mathcal{C}} Z_i\mathbf{S}_i}_1\right]\nonumber\\
	&=& \frac{2\alpha}{n} \mathbb{E}\left[\max_\mathcal{C}\max_{Z_C} \max_{\mathbf{u}\in \{-1,1\}^d} \sum_{j=1}^d u_j\sum_{i\in \mathcal{C}} Z_iS_i(j)\right].
	\label{eq:attack1}
\end{eqnarray}
In (a),
\begin{eqnarray}
	\mathcal{Z}^* = \{(Z_1,\ldots, Z_n)|Z_i\in \{Y_i, c_\epsilon R, -c_\epsilon R \},
	|\{i|Z_i\neq Y_i\}|\leq q\}.\hspace{-10mm}\nonumber\\
\end{eqnarray}
(a) holds since $\ell_1$ error is convex, thus the maximum $\ell_1$ error is reached at the boundary.

In (b), $\max_\mathcal{C}$ takes the maximum over all subsets of $[n]$ with size $q$. $\max_{Z_C}$ takes maximum over $\{-c_\epsilon R, c_\epsilon R\}^q$. 

Fix $\mathbf{u}$ and $Z_i$, $i=1,\ldots, n$, then $\sum_{j=1}^d u_j\sum_{i\in \mathcal{C}}Z_iS_i(j)$ is the sum of $qd$ random variables uniformly distributed in $\{-1,1\}$. By Hoeffding's inequality,
\begin{eqnarray}
	\text{P}\left(\sum_{j=1}^d u_j\sum_{i\in \mathcal{C}}Z_iS_i(j) > t\right) \leq  \exp\left(-\frac{t^2}{2qdc_\epsilon^2 R^2}\right).
\end{eqnarray}
Now we take union bound. There are $\binom{n}{q}$ possible selection of set $\mathcal{C}$. $Z_C$ can take $2^q$ values, and $\mathbf{u}$ can take $2^d$ values. Therefore
\begin{eqnarray}
	&&\text{P}\left(\max_\mathcal{C} \max_{Z_C} \max_{\mathbf{u}\in \{-1,1\}^d} \sum_{j=1}^d u_j\sum_{i\in \mathcal{C}}Z_iS_i(j)>t\right) \nonumber\\
	&&\leq \binom{n}{q} 2^{d+q} \exp\left(-\frac{t^2}{2qdc_\epsilon^2 R^2}\right).
    \label{eq:plarge}
\end{eqnarray}
To bound the expectation, we use the following lemma.
\begin{lem}\label{lem:expect}
	For a random variable $X$, if $X\geq 0$ almost surely, and 
	\begin{eqnarray}
		\text{P}(X>t)\leq ae^{-bt^2},
	\end{eqnarray}
	then
	\begin{eqnarray}
		\mathbb{E}[X]\leq \sqrt{\frac{\ln a}{b}}+\sqrt{\frac{\pi}{b}}.
	\end{eqnarray}
\end{lem}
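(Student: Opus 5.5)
We need to prove Lemma \ref{lem:expect}: if $X\ge 0$ a.s. and $\text{P}(X>t)\le ae^{-bt^2}$, then $\mathbb{E}[X]\le \sqrt{\ln a/b}+\sqrt{\pi/b}$.

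The plan is to use the layer-cake (tail integral) representation for a nonnegative random variable,
\begin{eqnarray}
	\mathbb{E}[X] = \int_0^\infty \text{P}(X>t)\,dt,
\end{eqnarray}
and split the integral at the threshold $t_0=\sqrt{\ln a/b}$. This is the point where the assumed tail bound $ae^{-bt^2}$ equals $1$. We implicitly need $a\ge 1$ so that $t_0$ is real. If $a<1$, the bound at $t=0$ would force $\text{P}(X>0)<1$. In that case I would simply replace $a$ by $\max\{a,1\}$, which weakens the hypothesis. In the intended application $a=\binom{n}{q}2^{d+q}\ge 1$, so this costs nothing.

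\textbf{Below $t_0$.} On $[0,t_0]$, bound the probability trivially by $1$. This contributes at most $t_0=\sqrt{\ln a/b}$, which is the first term.

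\textbf{Above $t_0$.} On $[t_0,\infty)$, use the hypothesis and substitute $t=t_0+s$. Since $(t_0+s)^2\ge t_0^2+s^2$ for $s\ge 0$, we get
\begin{eqnarray}
	ae^{-bt^2}\le ae^{-bt_0^2}e^{-bs^2}=e^{-bs^2}.
\end{eqnarray}
Hence
\begin{eqnarray}
	\int_{t_0}^\infty \text{P}(X>t)\,dt\le \int_0^\infty e^{-bs^2}\,ds=\frac{1}{2}\sqrt{\frac{\pi}{b}}\le \sqrt{\frac{\pi}{b}}.
\end{eqnarray}

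Adding the two pieces gives the claim, in fact with a factor $\tfrac12$ to spare on the second term.

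There is no real obstacle in this argument. The only points needing care are the $a\ge 1$ edge case and the elementary inequality $(t_0+s)^2\ge t_0^2+s^2$, which removes the factor $a$ exactly.
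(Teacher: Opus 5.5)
Your proof is correct and follows the paper's argument essentially step by step. Both use the tail-integral formula split at $t_c=\sqrt{(\ln a)/b}$, the trivial bound on $[0,t_c]$, and the inequality $t^2-t_c^2\ge (t-t_c)^2$ for $t\ge t_c$ (equivalent to your $(t_0+s)^2\ge t_0^2+s^2$) to reduce the tail to a Gaussian integral with a factor $\tfrac12$ to spare. Your explicit treatment of the case $a<1$, where $\sqrt{\ln a/b}$ is undefined, is a small point the paper leaves implicit.
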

\color{black}
\begin{proof}
    Note that with $t_c=\sqrt{(\ln a)/b}$, $ae^{-bt_c^2} = 1$. Then
\begin{eqnarray}
E[X]&=&\int_0^\infty P(X>t) dt = \int_0^{t_c} P(X>t) dt+\int_{t_c}^\infty P(X>t) dt \nonumber \\
&\leq & t_c+\int_{t_c}^\infty  ae^{-bt^2} dt \nonumber\\
&=& t_c+\int_{t_c}^\infty  e^{-b(t^2-t_c^2)}dt\nonumber\\
&\leq & t_c +\int_{t_c}^\infty  e^{-b(t-t_c)^2}dt\nonumber\\
&=&t_c+(1/\sqrt{b})\int_0^\infty e^{-u^2}du\nonumber\\
&\leq &  t_c + \sqrt{\pi/b}.
\end{eqnarray}
\end{proof}
\color{black}
Based on \autoref{lem:expect}, let
\begin{eqnarray}
	a=\binom{n}{q} 2^{d+q}, b=\frac{1}{2qdc_\epsilon^2 R^2},
\end{eqnarray}
then \textcolor{black}{by \eqref{eq:plarge},}
\begin{eqnarray}
	&&\mathbb{E}\left[\max_\mathcal{C} \max_{Z_C} \max_{\mathbf{u}\in \{-1,1\}^d} \sum_{j=1}^d u_j\sum_{i\in \mathcal{C}}Z_iS_i(j) \right]\nonumber\\
	&& \lesssim c_\epsilon R\sqrt{qd}\sqrt{q\ln n+d},
	\label{eq:attack2}
\end{eqnarray}
\color{black}in which $\lesssim$ is used because some constant factors are omitted. According to \eqref{eq:attack1}, \color{black}
\begin{eqnarray}
	\mathbb{E}\left[\max_{Z_{1:n}\in \mathcal{Z}} \norm{\hat{\mu}-\hat{\mu}_c}_1\right]\lesssim R\left(\frac{\alpha q\sqrt{d\ln n}}{n}+\frac{\alpha d\sqrt{q}c_\epsilon }{n}\right).
	\label{eq:corrupt}
\end{eqnarray}
Recall \autoref{thm:rpc}, which shows that $\mathbb{E}\left[\norm{\hat{\mu}_c-\mu}_1\right] \lesssim d\alpha R\sqrt{V_\epsilon/n}$, the second term in \eqref{eq:corrupt} does not dominate. Hence
\color{black}
\begin{eqnarray}
	\mathbb{E}\left[\max_{Z_{1:n}\in \mathcal{Z}} \norm{\hat{\mu}-\mu}_1\right] \lesssim d\alpha R\sqrt{\frac{V_\epsilon}{n}}+c_\epsilon \frac{\alpha qR\sqrt{d\ln n}}{n}.
\end{eqnarray}
With $R=\sqrt{m}$, $\alpha$ is upper bounded by a constant, thus for $\epsilon \leq 1$,
\begin{eqnarray}
	\mathbb{E}\left[\max_{Z_{1:n}\in \mathcal{Z}} \norm{\hat{\mu}-\mu}_1\right]\lesssim d\sqrt{\frac{V_\epsilon m}{n\epsilon^2}}+c_\epsilon \frac{q\sqrt{dm\ln n}}{n\epsilon}.
\end{eqnarray}
\color{black}

\color{black}
\mypara{Under additive contamination model} In this case, we no longer need to take maximum over $\mathcal{C}$ at the right hand side of \eqref{eq:attack1}, since the adversary can not select the corruption set $\mathcal{C}$ arbitrarily. Instead, $\mathcal{C}$ is randomly selected from all samples. Following the proof regarding strong contamination model, it can be shown that 

\begin{eqnarray}
	\text{P}\left( \max_{Z_C} \max_{\mathbf{u}\in \{-1,1\}^d} \sum_{j=1}^d u_j\sum_{i\in \mathcal{C}}Z_iS_i(j)>t\right) \leq  2^{d+q} \exp\left(-\frac{t^2}{2qdc_\epsilon^2 R^2}\right).
    \label{eq:plarge2}
\end{eqnarray}
Compared with \eqref{eq:plarge}, now we have removed the $\binom{n}{q}$ factor. The remaining steps almost remain the same, thus we omit them for brevity. As a result,
\begin{eqnarray}
	\mathbb{E}\left[\max_{Z_{1:n}\in \mathcal{Z}} \norm{\hat{\mu}-\mu}_1\right]\lesssim d\sqrt{\frac{m}{n\epsilon^2}}+\frac{q\sqrt{dm}}{n\epsilon}.
\end{eqnarray}

The proof is complete.

\section{Analysis of the Robustness of Collision Mechanism}\label{sec:colanalysis}

In this section, we analyze the performance of the Collision mechanism \cite{wang2023differentially} under attacks. We still decompose the estimation error using \eqref{eq:decomp}. Then we focus on $\norm{\hat{\mu}-\hat{\mu}_c}$. We first analyze under additive contamination model. Similar to the analysis in \autoref{sec:attack}, Under strong contamination model, there is an additional $\sqrt{\ln n}$ factor.

\begin{eqnarray}
    \norm{\hat{\mu}-\hat{\mu}_c}_1 &=& \sum_{j=1}^d |\hat{\mu}(j)-\hat{\mu}_c(j)|\nonumber\\
    &=& \max_{\mathbf{u}\in \{-1,1\}^d} \sum_{j=1}^d u(j) (\hat{\mu}(j)-\hat{\mu}_c(j)).
\end{eqnarray}

Recall \eqref{eq:muccol}, 

\begin{eqnarray}
    &&\max_{Z_C} \norm{\hat{\mu}-\hat{\mu}_c}_1 \nonumber\\
    &=& \frac{1}{n} \frac{1}{e^\epsilon/\Omega - 1/t} \max_{\mathbf{u}\in \{-1,1\}^d}\max_{Z_C}\sum_{j=1}^d \sum_{i\in C}[\mathbf{1}(Z_i=H_i(j))\nonumber\\
    &&-\mathbf{1}(Y_i=H_i(j)]\nonumber\\
    &\leq & \frac{2}{n} \frac{1}{e^\epsilon/\Omega - 1/t}\max_{\mathbf{u}\in \{-1,1\}^d}\max_{Z_C} \sum_{j=1}^d \sum_{i\in C} \left(\mathbf{1}(Z_i=H_i(j))-\frac{1}{t}\right).\nonumber\\
\end{eqnarray}
By Hoeffding's inequality,
\begin{eqnarray}
    \text{P}\left(\sum_{j=1}^d \sum_{i\in C} \left(\mathbf{1}(Z_i=H_i(j))-\frac{1}{t}\right)>u\right) \leq e^{-\frac{2u^2}{qd}}.
\end{eqnarray}
Taking maximum over $\mathbf{u}$ and $Z_C$, we have
\begin{eqnarray}&&\text{P}\left(\max_{\mathbf{u}\in \{-1,1\}^d} \max_{Z_C}\sum_{j=1}^d \sum_{i\in C} \left(\mathbf{1}(Z_i=H_i(j))-\frac{1}{t}\right)>u\right)\nonumber\\
    &&\leq 2^{d+q} e^{-\frac{2u^2}{qd}}.
\end{eqnarray}

Therefore, from Lemma \ref{lem:expect},
\begin{eqnarray}
 \mathbb{E}\left[\max_{\mathbf{u}\in \{-1,1\}^d} \max_{Z_C}\sum_{j=1}^d \sum_{i\in C} \left(\mathbf{1}(Z_i=H_i(j))-\frac{1}{t}\right)\right] \lesssim d\sqrt{q}+q\sqrt{d}.
\end{eqnarray}

Recall that $\Omega = me^\epsilon+t-m$. For $\epsilon=O(1)$,
\begin{eqnarray}
    \max_{Z_C} \norm{\hat{\mu}-\hat{\mu}_c}_1\lesssim \frac{t}{n\epsilon} (d\sqrt{q}+q\sqrt{d}).
\end{eqnarray}
According to \cite{wang2023differentially}, the optimal value of $t$ is $t^*=2m-1+me^\epsilon$. Therefore, with $t=t^*$, for $\epsilon=O(1)$,
\begin{eqnarray}
      \max_{Z_C} \norm{\hat{\mu}-\hat{\mu}_c}_1\lesssim \frac{m}{n\epsilon} (d\sqrt{q}+q\sqrt{d}).  
\end{eqnarray}

\color{black}

\section{Proof of \autoref{thm:trusted}}\label{sec:trusted}
Note that now $\hat{\mu}_c = \frac{1}{n}\sum_{i=1}^n Y_i \mathbf{S}_i$. Then
\begin{eqnarray}
	\mathbb{E}[\hat{\mu}_c] = \mathbb{E}\left[\frac{1}{n}\sum_{i=1}^n Y_i\mathbf{S}_i\right] = \mathbb{E}\left[\frac{1}{n}\sum_{i=1}^n u_i \mathbf{S}_i\right].
\end{eqnarray}
Hence
\begin{eqnarray}
	&&\mathbb{E}[\hat{\mu}_c(j)] - \mu(j)\nonumber\\
	&=& \frac{1}{n}\sum_{i=1}^n (\mathbb{E}[u_iS_i(j)] - x_i(j))\nonumber\\
	&=& \frac{1}{n}\sum_{i=1}^n \left(\mathbb{E}[\Clip(\langle \mathbf{x}_i, \mathbf{S}_i\rangle, R) S_i(j)] - x_i(j)\right)\nonumber\\
	&=&\frac{1}{n}\sum_{i=1}^n \left(\mathbb{E}[\Clip(\langle \mathbf{x}_i, \mathbf{S}_i\rangle S_i(j), R)] - x_i(j)\right).
	\label{eq:biasexp}
\end{eqnarray}
Note that
\begin{eqnarray}
	\langle \mathbf{x}_i, \mathbf{S}_i\rangle S_i(j) &=& \sum_{l=1}^d x_i(l) S_i(l)S_i(j) \nonumber\\
	&=& x_i(j) +\sum_{l\neq j} x_i(l)S_i(l),
\end{eqnarray}
thus
\begin{eqnarray}
	&&\hspace{-7mm}\Clip(\langle \mathbf{x}_i, \mathbf{S}_i\rangle S_i(j), R) \nonumber\\
	&&\hspace{-7mm}= x_i(j) + \Clip \left(\sum_{l\neq j} x_i(l)S_i(l), -R-x_i(j), R-x_i(j)\right).\nonumber\\
	\label{eq:clip}
\end{eqnarray}
Define
\begin{eqnarray}
	U_{ij} := \sum_{l\neq j} x_i(l)S_i(l).
\end{eqnarray}
Put \eqref{eq:clip} into \eqref{eq:biasexp},
\begin{eqnarray}
	&&\mathbb{E}[\Clip(\langle \mathbf{x}_i, \mathbf{S}_i\rangle S_i(j), R)] - x_i(j)\nonumber\\
	&=& \mathbb{E}\left[\Clip\left(\sum_{l\neq j} x_i(l)S_i(l), -R-x_i(j), R-x_i(j)\right) \right]\nonumber\\
	&=& \mathbb{E}[U_{ij} \mathbf{1}(-R-x_i(j)\leq U\leq R-x_i(j))]\nonumber\\
	&&+ (R-x_i(j))\text{P}(U>R-x_i(j))	\nonumber\\
	&&-(R+x_i(j)) \text{P}(U<-R-x_i(j))\nonumber\\
	&=& \mathbb{E}[U\mathbf{1}(-R-x_i(j)\leq U< -R+x_i(j))] \nonumber\\
	&&+ (R-x_i(j))\text{P}(R-x_i(j)<R\leq R+x_i(j))\nonumber\\
	&&-2x_i(j) \text{P}(U>R+x_i(j)).
\end{eqnarray}
Since $|x_i(j)|\leq 1$, taking absolute value to the above equation yields
\begin{eqnarray}
	|\mathbb{E}[\Clip(\langle \mathbf{x}_i, \mathbf{S}_i\rangle S_i(j), R)] - x_i(j)|\leq 2\text{P}(U_{ij}>R-1).\hspace{-5mm}\nonumber\\
\end{eqnarray}
By Hoeffding's inequality,
\begin{eqnarray}
	\text{P}(U_{ij}>t)&\leq& \exp\left(-\frac{t^2}{2\sum_{l\neq j} x_i^2(l)}\right)\nonumber\\
	&\leq & e^{-\frac{t^2}{2m\beta^2}},
\end{eqnarray}
thus
\begin{eqnarray}
	|\mathbb{E}[\hat{\mu}_c(j)]-\mu(j)|\leq 2e^{-\frac{(R-1)^2}{2m\beta^2}}.
\end{eqnarray}

Note that the derivation of the variance remains the same as \autoref{thm:rpc}. Therefore
\begin{eqnarray}
	\mathbb{E}\left[\norm{\hat{\mu}_c-\mu}_2^2\right]&\leq & \frac{d}{n}(V_\epsilon R^2+m)+d\norm{\mathbb{E}[\hat{\mu}_c]-\mu}_\infty^2]\nonumber\\
	&\leq& \frac{d}{n}(V_\epsilon R^2+m)+ 2de^{-\frac{(R-1)^2}{m\beta^2}}.
\end{eqnarray}
By Cauchy-Schwartz inequality,
\begin{eqnarray}
	\mathbb{E}[\norm{\hat{\mu}_c-\mu}_1] \leq d\sqrt{\frac{V_\epsilon R^2+m}{n}} + \sqrt{2} d e^{-\frac{(R-1)^2}{2m\beta^2}}.
\end{eqnarray}
The proof of \autoref{thm:trusted} is complete.

\section{Proof of \autoref{thm:untrusted}}\label{sec:untrusted}
Recall that $\hat{\mu}=(1/n)\sum_{i=1}^n Z_i\mathbf{S}_i$, $\hat{\mu}_c = (1/n)\sum_{i=1}^n Y_i\mathbf{S}_i$. By triangle inequality,
\begin{eqnarray}
	\mathbb{E}\left[\norm{\hat{\mu}-\mu}_1\right] \leq \mathbb{E}\left[\norm{\hat{\mu}_c-\mu}_1\right] + \mathbb{E}\left[\norm{\hat{\mu}-\hat{\mu}_c}\right].
\end{eqnarray}
The first term $\mathbb{E}[\norm{\hat{\mu}_c-\mu}_1]$ is already bounded in \autoref{thm:trusted}. Therefore, now we only bound the second term. Note that
\begin{eqnarray}
	\norm{\hat{\mu}-\hat{\mu}_c}_1 = \norm{\frac{1}{n} \sum_{i=1}^n (Z_i-Y_i)\mathbf{S}_i}_1.
\end{eqnarray}
Follow the analysis from \eqref{eq:attack1} to \eqref{eq:attack2},
\begin{eqnarray}
	&&\mathbb{E}\left[\max_{Z_{1:n}\in \mathcal{Z}} \norm{\hat{\mu}-\hat{\mu}_c}_1\right]\nonumber\\
	&\lesssim& R\left(\frac{q\sqrt{d\ln n}}{n}+\frac{d\sqrt{q}c_\epsilon}{n}\right)\nonumber\\
	&\sim & \beta \sqrt{m\ln n} \left(\frac{q\sqrt{d\ln n}}{n}+\frac{d\sqrt{q}c_\epsilon}{n}\right).
	\label{eq:attack}
\end{eqnarray}
Compared with the bound of $\mathbb{E}[\norm{\hat{\mu}_c-\mu}_1]$ in \eqref{eq:l1general}, the second term in \eqref{eq:attack} does not dominate. Therefore
\begin{eqnarray}
	&&\mathbb{E}\left[\max_{Z_{1:n}\in \mathcal{Z}} \norm{\hat{\mu}-\hat{\mu}_c}_1\right]\nonumber\\
	&&\lesssim q\beta \frac{\sqrt{md\ln^2n}}{n} + d\left(\beta \sqrt{\frac{m \ln n}{n\epsilon^2}}+\sqrt{\frac{m}{n}}\right).
\end{eqnarray}
The proof is complete.

\end{document}